\numberwithin{equation}{section}
\def\eqref#1{equation~\ref{#1}}
\def\1{\bm{1}}
\DeclareMathAlphabet{\mathsfit}{\encodingdefault}{\sfdefault}{m}{sl}
\SetMathAlphabet{\mathsfit}{bold}{\encodingdefault}{\sfdefault}{bx}{n}
\def\sN{{\mathbb{N}}}
\def\sP{{\mathbb{P}}}
\def\sR{{\mathbb{R}}}
\newcommand{\E}{\mathbb{E}}
\newcommand{\R}{\mathbb{R}}
\newcommand{\Var}{\mathrm{Var}}
\DeclareMathOperator{\Tr}{Tr}
\newcommand{\dif}{\mathrm{d}}
\newtheorem{thm}{Theorem}[section]
\newtheorem{rmk}{Remark}
\newtheorem{Assumption}{Assumption}
\newtheorem{lemma}[thm]{Lemma}
\newtheorem{prop}[thm]{Proposition}
\theoremstyle{definition}
\newtheorem{example}{Example}
\crefname{equation}{eq.}{eqs.}
\Crefname{equation}{Equation}{Equations}
\crefname{Assumption}{Assumption}{Assumptions}
\Crefname{Assumption}{Assumption}{Assumptions}
\crefname{thm}{Theorem}{Theorems}
\Crefname{thm}{Theorem}{Theorems}
\crefname{prop}{Proposition}{Propositions}
\Crefname{prop}{Proposition}{Propositions}
\crefname{definition}{def.}{defs.}
\Crefname{definition}{Def.}{Defs.}
\crefname{algorithm}{alg.}{algs.}
\Crefname{algorithm}{Alg.}{Algs.}
\crefname{figure}{Figure}{Figures}
\Crefname{figure}{Figure}{Figures}
\crefname{section}{Section}{Sections}
\Crefname{section}{Section}{Sections}
\newcommand{\cirone}{\text{\ding{172}}}
\newcommand{\cirtwo}{\text{\ding{173}}}
\newcommand{\cirthree}{\text{\ding{174}}}
\newcommand{\cirfour}{\text{\ding{175}}}
\renewcommand{\eqref}[1]{(\ref{#1})}
\definecolor{Ocean}{RGB}{129,194,234}
\title{Deterministic Policy Gradient for Reinforcement Learning with  Continuous Time and State}
\author{
Ziheng Cheng\thanks{University of California, Berkeley. Email: \texttt{ziheng\_cheng@berkeley.edu}}
\and Xin Guo\thanks{University of California, Berkeley. Email: \texttt{xinguo@berkeley.edu}}
\and Yufei Zhang\thanks{Imperial College London. Email: \texttt{yufei.zhang@imperial.ac.uk}}
} 
\date{}
\begin{document}

\maketitle

\begin{abstract}

The theory of continuous-time reinforcement learning (RL) has progressed rapidly in recent years. While the ultimate objective of RL is typically to learn deterministic control policies, most existing continuous-time RL methods rely on stochastic policies. Such approaches often require sampling actions at very high frequencies, and involve computationally expensive expectations over continuous action spaces, resulting in high-variance gradient estimates and slow convergence.

  In this paper, we introduce and develop deterministic policy gradient (DPG) methods for continuous-time RL. We derive a continuous-time policy gradient formula expressed as the expected gradient of an advantage rate function and establish a martingale characterization for both the value function and the advantage rate. These theoretical results provide tractable estimators for deterministic policy gradients in continuous-time RL. Building on this foundation, we propose a model-free continuous-time Deep Deterministic Policy Gradient (CT-DDPG) algorithm that enables stable learning for general reinforcement learning problems with continuous
time-and-state.
  Numerical experiments show that CT-DDPG achieves superior stability and faster convergence compared to existing  stochastic-policy methods, across a wide range of learning tasks with varying time discretizations and noise levels.

\end{abstract}

\section{Introduction}
Deep reinforcement learning (RL) has achieved remarkable success across a wide range of domains, driven by 
theoretical advances and the success of algorithms in discrete-time systems such as Atari, Go, and Large Language Models \citep{mnih2013playing, silver2016mastering,guo2025deepseek}. Algorithms developed for discrete-time environments have demonstrated impressive performance in applications such as game playing, robotics, and large-scale decision systems. Despite these advances, many real-world decision-making problems, including robotic manipulation, autonomous driving, and financial trading,
are inherently continuous in time. In these settings, control actions must evolve smoothly and often at extremely high frequencies, highlighting the need for principled continuous-time reinforcement learning frameworks \citep{wang2020reinforcement}.

\paragraph{Issues with discretization.} 
A common approach for handling continuous-time problems is to apply standard discrete-time RL algorithms after discretizing the time domain. However, this strategy introduces significant challenges
\citep{baird1994reinforcement,doya2000reinforcement,munos2006policy}. In particular, the performance of many RL algorithms is highly sensitive to the discretization step size. As the step size becomes smaller, classical RL algorithms often suffer from severe degradation, including exploding gradient variance, numerical instability, and dramatically slower learning. These issues make naive discretization unsuitable for high-frequency control tasks and motivate the development of algorithms whose behavior remains stable as the discretization step decreases.

Several recent works attempt to address these challenges. For example, discretization-invariant formulations based on the advantage rate function \citep{tallec2019making} have been proposed to stabilize Q-learning methods, and alternative strategies such as action repetition have been introduced to reduce the variance of policy gradient estimators \citep{park2021time}. Empirical studies \citep{treven2023efficient} have also explored the effects of different discretization schemes in continuous-time settings. While these approaches alleviate some sensitivity to the time step, they typically rely on restrictive assumptions, and most notably deterministic dynamics and therefore remain difficult to apply in realistic stochastic  and continuous environments. Moreover, despite substantial empirical evidence of discretization-induced failures, their analytical and theoretical understanding remains missing.

\paragraph{Stochastic policy and its challenges.}
Another line of research formulates continuous-time RL using stochastic policies, often in systems governed by stochastic differential equations. These methods extend classical policy-gradient and actor–critic frameworks to continuous time and establish connections between RL and stochastic control theory.
In particular,
\citep{jia2022policy_eval,jia2022policy_grad} develop  the continuous-time analogue of the discrete-time advantage function, namely the $q$-function, \citep{jia2023q}   proposes a $q$-learning algorithm.  
\citep{giegrich2024convergence,sethi2025entropy} extend natural policy gradient methods to the continuous-time setting, and \citep{zhao2023policy} further generalizes PPO \citep{schulman2017proximal} and TRPO \citep{schulman2015trust} methods to continuous time. 

However, stochastic policies introduce their own set of difficulties. First, they require sampling random actions at very high frequencies, producing irregular control trajectories that may be impractical in real-world systems
\citep{jia2025accuracy}. Second, evaluating the value or Q-function under stochastic policies typically requires integration over continuous action spaces, which is computationally expensive and often approximated using Monte Carlo methods. As a result, deep RL methods based on stochastic policies frequently exhibit unstable training dynamics and slow convergence in practice. (See also experiments in \Cref{sec:exp}).

 Furthermore, from a control perspective, the ultimate objective in many continuous-time problems is to learn a deterministic control policy. Recovering such policies from stochastic formulations can be challenging, particularly when the stochastic policy is represented by a general Gibbs distribution over actions. In fact, existing stochastic-policy approaches are largely tractable only in specialized settings such as linear–quadratic regulator (LQR) problems, where the optimal policy has a simple parametric form.

These limitations naturally raise the following question:
\begin{center}
Can we develop a theoretically grounded and practically efficient reinforcement learning framework that directly learns deterministic policies in continuous-time RL?
\end{center}

\paragraph{Our work.}

In this paper, we address this question by studying deterministic policy gradient (DPG) methods in continuous-time RL. We consider controlled stochastic dynamics described by stochastic differential equations over a finite horizon and develop a model-free framework for deterministic policy optimization. Our analysis establishes the theoretical foundations of deterministic policy gradients in continuous time and connects them with existing stochastic policy formulations. Specifically, our main contributions are

\begin{itemize}[leftmargin=2em]
    \item Continuous-time deterministic policy gradient theory: 
we develop a rigorous mathematical framework for model-free deterministic policy gradient methods in continuous-time RL (\Cref{sec:theory}). In particular, we derive the deterministic policy gradient formula (\Cref{prop:pg})  using the gradient of the advantage rate function and provide a martingale-based characterization that enables model-free estimation (\Cref{thm:martingale_char}). Moreover,
we show   (\Cref{thm:spg_limit}) 
the connection between  the deterministic policy gradient and the stochastic policy gradients when the stochastic policies concentrate around deterministic ones.

\item A stable and efficient deep RL algorithm:
building on the theoretical framework, we propose CT-DDPG, a novel actor–critic algorithm designed for continuous-time environments
(\Cref{sec:algorithms}). The method employs a multi-step temporal-difference objective and demonstrates improved stability with respect to time discretization and stochastic noise.
We establish theoretical analysis regarding robustness of discretization and stochastic noises  (\Cref{prop:var_normal}). For the first time, we provide  theoretical insights into the discretization-induced-failure for deep RL algorithms in continuous and stochastic settings.

\item Numerical experiment: We show,  through extensive experiments  (\Cref{sec:exp}), that CT-DDPG achieves superior stability and faster convergence compared to existing  stochastic policy methods, across a wide range of learning tasks with varying time discretizations and noise levels. 
    In particular, we confirm  that existing discrete/continuous time algorithms lack robustness to time discretization and dynamic noise, while our method exhibits consistently stable performance.

\end{itemize}
Together, our results provide both theoretical insights and practical algorithms for RL in continuous-time systems, bridging the gap between modern deep RL methods and stochastic control formulations.

We would like to emphasize that our work, based on a deterministic policy, represents a significant conceptual departure from earlier studies on stochastic policies.  Our framework and analysis demonstrate, for the first time, how to design efficient model-free  algorithms to learn the deterministic policy {\it directly} in continuous-time RL problems. 
The proposed deterministic policy gradient is distinct from the stochastic policy gradient formulas in \citep{jia2023q} (\Cref{rmk:spg_degenerate}). Moreover, the martingale characterization of the advantage rate function requires only local exploration around the current action (\Cref{rmk:local_exploration}), and    avoids the costly integrations over continuous action spaces required in the stochastic policy approach (\Cref{rmk:DPGvsSPG}), thereby enabling sample-efficient off-policy learning. 
Leveraging deterministic policies, we are able to enforce the Bellman equation via a simple reparameterization trick, leading to a stable deep RL implementation.

\paragraph{Notations.}  
We denote by 
$ C([0, T]\times \sR^n) $
(reps.~$C_b([0, T]\times \sR^n) $)
the space of continuous (resp.~bounded) functions $ u : [0, T] \times \sR^n  \to \mathbb{R} $, 
and  by  
 $ C^{1,2}([0, T]\times \sR^n) $
the space of   functions  
$ u \in C([0, T]\times \sR^n)  $   that are once continuously differentiable in time and twice continuously differentiable in space, and there exists a constant $C\ge 0$
such that for all $(t,x)\in [0,T]\times \sR^n$, $|u(t,x)|+|\partial_t u(t,x)|\le C(1+|x|^2), |\partial_x u(t,x)|\le C(|1+|x|)$ and $  |\partial^2_{xx} u(t,x)| \le C$. 
We denote by  
$C^{i+\alpha/2, j+\alpha}_b([0,T]\times \sR^{n})$, $i,j\in \sN\cup\{0\}$,   the space of    functions 
$u:[0,T]\times \sR^{n}\to \sR$  
such that 
$
\|u\|_{{i+\alpha/2,j+\alpha }} \coloneqq \sum_{\ell= 0}^{i} \|\partial^{\ell}_t u\|_{0 } 
+ \sum_{\ell= 0}^{j}  \|\partial^\ell_x u\|_{0} 
+ [\partial^{i}_t u]_{\alpha/2 } 
+[\partial^{j}_x u]_{\alpha } 
<\infty
$, where
$\|u\|_{0}\coloneqq \sup_{(t,x) \in [0,T]\times \sR^n}
|u(t,x)|$
and 
$
[u]_{\alpha} \coloneqq  \sup_{(t,x),(t,x')\in [0,T]\times \sR^n}
\frac{|u(t,x)-u(t',x')|}{(|t-t'|^{1/2}+|x-x'|)^\alpha } 
$. 
For each metric space $E$,
we denote by 
 $\mathcal{P}(E)$   the space  of   probability measures on $E$, equipped with the topology of weak convergence.
Finally,
given $f,g:(0,\infty)\to \mathbb R$,
we write $f(x)=\mathcal{O}(g(x))$ 
if 
$\limsup_{x\to 0}|f(x)|/|g(x)|<\infty$,
write   $f(x)=\Omega(g(x))$ if 
 $\liminf_{x\to 0}|f(x)|/|g(x)|>0$, and   write $f(x)=\Theta(g(x))$ if $f(x)=\mathcal{O}(g(x))$ and $f(x)=\Omega(g(x))$.

\section{Problem Formulation of Continuous RL }
  
This section formulates the   RL  problem with continuous time and space, where  the agent   learns  an optimal parametrized policy to  control a stochastic differential equation with unknown coefficients  to maximize a reward functional over a finite time horizon. {The extension to the infinite time horizon is  similar, and can be developed with appropriate modifications.}

Let $T>0$  be a given terminal time and 
$(\Omega, \mathcal F,\sP)$   be a complete probability space which supports  an $n$-dimensional  Brownian motion $W$
and an independent square-integrable random variable $\xi_0$.
   We denote by   $\mathbb{F}=(\mathcal F_t)_{t\ge 0}$  the filtration generated by $W$ and $\xi_0$ augmented by
the $\mathbb P$-null sets. Let 
 $\mathcal{A}\subseteq\R^d$ be a measurable set representing the agent's action space,
 and let $
 \mathcal U$ be a space of all 
 $\mathbb F$-adapted square-integrable processes 
 $\alpha:[0,T]\times \Omega\to \mathcal{A}$, representing the agent's admissible control space.
For each $\alpha\in \mathcal U$,
consider the associated  state dynamics:
\begin{equation}
\label{eq:control_dynamics}
    \dif X_t^{{\alpha}}=b(t,X_t^{\alpha},\alpha_t)\dif t+\sigma(t,X_t^{\alpha},\alpha_t)\dif W_t,
    \; t\in[0,T];
    \quad  X_0^{\alpha}=\xi_0,  
\end{equation}
where   $b:[0,T]\times\R^n\times \R^d\to\R^n$, $\sigma:[0,T]\times\R^n\times \R^d\to\R^{n\times n}$ are continuous    functions such that \eqref{eq:control_dynamics} admits a unique square-integrable solution $X^\alpha$. The agent aims to maximize the following reward functional 
\begin{equation}
   \label{eq:control_objective} \E\left[\int_0^Te^{-\beta t}r(t,X_t^{\alpha},\alpha_t)\dif t+e^{-\beta T}g(X_T^{\alpha})\right]
\end{equation}
over all admissible controls $\alpha \in \mathcal U$,  
where $\beta\ge 0$ is a discount factor, and the running reward $r:[0,T]\times\R^n\times \R^d\to \R$ and the terminal reward $g:\R^n\to \R$ are continuous functions with at most quadratic growth.

It is well-known  that 
under mild regularity conditions\footnotemark, 
it suffices to optimize 
\eqref{eq:control_objective} over     control processes of the form 
$\alpha^\mu_t=\mu(t,X^\mu_t)$ for all $t\in [0,T]$,
where 
     $\mu:[0,T]\times \sR^n\to \mathcal{A}$ is a Markovian control and $X^\mu$ is a solution to the following controlled   dynamics:
\begin{equation}
\label{eq:state_mu}
    \dif X^\mu_t=b(t,X^\mu_t,\mu(t,X^\mu_t))\dif t+\sigma(t,X^\mu_t,\mu(t,X^\mu_t))\dif W_t, 
    \;t\in [0,T];
    \quad X^\mu_0=\zeta_0. 
\end{equation} 
Consequently, the goal of the agent is to maximize  the following objective 
\begin{equation}
\label{eq:cost_mu}
\E\left[\int_0^Te^{-\beta t}r(t,X_t^\mu,\mu(t,X^\mu_t))\dif t+e^{-\beta T}g(X_T^\mu)\right]
\end{equation}
over all admissible Markovian policies $\mu$. 

\footnotetext{
Sufficient conditions are that the coefficients satisfy suitable convexity conditions with respect to the action 
\cite[Theorem 2.10]{nicole1987compactification}, or that the diffusion coefficient is non-degenerate \cite{krylov2008controlled}.
}

In an RL framework, the agent does not have access to the coefficients $b$, $\sigma$, $r$ and $g$. Instead, the agent directly interacts with  \eqref{eq:state_mu} with different actions, 
and refines her strategy based on  observed state and reward trajectories. 
Following the classical control framework,   throughout the paper we restrict  admissible policies to be Markovian policies $\mu:[0,T]\times \sR^n\to \mathcal A$ to optimize \eqref{eq:cost_mu}.  
These policies are referred to as deterministic policies in the RL literature, as they map time and state directly to a given action.

   To solve this RL problem,  
   one typically restricts   the optimization problem \eqref{eq:cost_mu} to   a sufficiently rich class of parameterized policies. More precisely, 
given a class of policies
$\{\mu_\phi:[0,T]\times \sR^n\to \mathcal{A} \mid  \phi\in \R^k\} $  parameterized by $\phi$, we consider the  following maximization problem:
\begin{equation}
\label{eq:cost_phi}
\max_{\phi\in \R^k} J(\phi),
\quad \textnormal{with}
\quad 
J(\phi)  \coloneqq \E\left[\int_0^Te^{-\beta t}r(t,X_t^\phi,\mu_\phi(t,X^\phi_t))\dif t+e^{-\beta T}g(X_T^\phi)\right],
\end{equation}
where  
 $X^\phi$ denotes   the state process 
 \eqref{eq:state_mu} controlled by $\mu_\phi$. 
Throughout this paper, we  
 impose the following regularity conditions on 
   the  model coefficients and the policy class.

\begin{Assumption}
\label{assum:wp}

    There exists $C\ge 0$
such that 
for all $t\in [0,T]$, $a, a'\in \mathcal{A}$   and $x,x'\in \sR^n$,
\begin{align*}
|b(t,x,a)-b(t,x',a')|+|\sigma(t,x,a)-\sigma(t,x',a')|
&\le C(|x-x'|+|a-a'|),
\\
|b(t,0,0)|+
|\sigma(t,0,0)|
\le C,
\quad 
|r(t,x,a)|+|g(x)|&\le C(1+|x|^2+|a|^2).
\end{align*}
There exists a locally bounded function $\rho_1:[0,\infty)\to [0,\infty) $  such that for all $\phi\in  \sR^k$, $t\in [0,T]$, and 
$x,x'\in \sR^n$,
$|\mu_\phi(t,x) -\mu_\phi(t,x') | \le \rho_1(|\phi|) |x-x'|$
 and 
 $|\mu_\phi(t,0)|\le \rho_1(|\phi|)$.

 \end{Assumption}
  
\Cref{assum:wp} holds for all policies parameterized by feedforward neural networks and transformers with Lipschitz activations. It ensures that the state dynamics and the objective function are well defined for any $\phi\in \R^k$.

\section{Main Theoretical Results}\label{sec:theory}
 We will first characterize the gradient of the objective functional \eqref{eq:cost_phi} with respect to the policy parameter $\phi$, using a  continuous-time analogue of the discrete-time advantage function. 
We will then derive a martingale characterization of this continuous-time advantage   function and value function, which serves as the foundation of our algorithm design under deterministic policies.

\subsection{Deterministic Policy Gradient (DPG) Formula}

We first  introduce a dynamic version of  
the objective function $J(\phi)$. For each $(t,x)\in [0,T]\times \sR^n$, define the value function
\begin{align}
\label{eq:value_t_x}
V^\phi(t,x ) \coloneqq \mathbb E \left[
\int_t^T 
e^{-\beta (s-t)}
r(s,X_s^\phi,\mu_\phi(s,X^\phi_s))\dif s+e^{-\beta (T-s)}g(X_T^\phi)
\,\bigg\vert\, X_t^\phi=x
\right].
\end{align}
Note that $J(\phi)=\E[V^\phi(0,\xi_0)]$.
We additionally impose the following differentiability condition on the model parameters and policies with respect to the parameter.
\begin{Assumption}
\label{assum:differentiability}
Let $\Sigma\coloneqq \sigma\sigma^\top$.
The functions 
$\partial_a b,\partial_a \Sigma, \partial_a r$,
and $\partial_\phi \mu $
exist and are continuous. 
    There exists a locally bounded function $\rho_2:[0,\infty)\to [0,\infty) $  such that for all $\phi\in \sR^k$
    and $(t,x)\in [0,T]\times \sR^n$,
$$
\frac{|\partial_\phi b(t,x,\mu_\phi(t,x))|}{1+|x|}
+
\frac{|\partial_\phi \Sigma(t,x,\mu_\phi(t,x))|
+ |\partial_\phi r(t,x,\mu_\phi(t,x))|}{ 1+|x|^2}\le \rho_2(|\phi|).
$$
  $V^\phi\in C^{1,2}([0,T]\times \sR^n)$ for all $\phi\in \sR^k$.

\end{Assumption}
 
Under \Cref{assum:wp}, we have the Feyman-Kac formula for the value function. That is, for any given $\phi\in \sR^k$,
$V^\phi\in C^{1,2}([0, T]  \times \sR^n)$ satisfies 
the following linear Bellman equation:
for all $(t,x)\in [0,T]\times \sR^n$, 
\begin{align} 
\label{eq:pde_v}
\begin{split} 
 &  \mathcal L [ V^\phi](t,x, \mu_\phi(t,x)) +r(t,x,\mu_\phi(t,x)) =0,
\quad 
  V^\phi(T,x) = g(x),
  \end{split}
\end{align}
where  
$\mathcal L$   is the generator of \eqref{eq:state_mu}
    such that 
for all $\varphi\in C^{1,2}([0, T] \times \sR^n) $,
\begin{align}
\label{eq:generator_L}
\begin{split}
 \mathcal L[ \varphi](t,x, a)
 &\coloneqq  \partial_t \varphi(t,x)
     -\beta   \varphi(t,x)
     +  b(t,x, a)^\top \partial_x \varphi(t,x) +\frac{1}{2} \textrm{Tr}( \Sigma (t,x,a)     
     \partial^2_{xx} \varphi(t,x)).
     \end{split}
\end{align}

Moreover, by 
quantifying the difference between the value functions corresponding to two policies, along with Vitali’s convergence theorem, one can obtain
the  following   DPG formula for the continuous RL problem.  
A similar  formula has been   established in \citep{gobet2005sensitivity}   under a different set of conditions.

\begin{prop}
\label{prop:pg}
 Suppose \Cref{assum:wp,assum:differentiability} hold.  
 For all   $(t,x)\in [0,T]\times \sR^n$
 and $ \phi \in \sR^k$,
 \begin{align*}
 \begin{split}
&  \partial_\phi V^{\phi}(t,x)
   =  
     \mathbb E \left[ \int_t^T
   e^{-\beta(s-t)} 
   \partial_\phi \mu_\phi(s,X^\phi_s)^\top 
    \partial_a A^\phi(s, X^{  \phi }_s,\mu_\phi(s,X^\phi_s)) 
   \dif s\,\bigg\vert\, X^\phi_t=x \right],
 \end{split}
 \end{align*}
 where 
 $A^\phi(t,x,a)\coloneqq  \mathcal L[ V^\phi](t,x, a)  +  r(t, x,   a)$.
\end{prop}

\begin{rmk} 

\Cref{prop:pg} is analogous to the DPG 
formula for discrete-time Markov decision processes \citep{silver2014deterministic}.
The function 
$A^\phi$
plays the   role of advantage function   used in  discrete-time DPG, 
and has been   referred to as the advantage \textit{rate} function in \citep{zhao2023policy}. 
Recall that 
the advantage function in discrete-time RL is  defined as the difference between the $Q$-function and value function for a given policy $\mu$ \cite{sutton1999policy,silver2014deterministic}.
It 
measures how much better a specific action $a$
 is compared to the average action in any given state $s$
 under some policy  $\mu$. It reduces variance in policy gradient methods, with clearer and more stable learning signals.

To see this connection, 
assume $\beta=0$, and 
for any     $N\in \mathbb N$, consider the  discrete-time version of \eqref{eq:cost_phi}:
\begin{equation}
\label{eq:cost_phi_delta_t}
J_{\Delta t}(\phi)  \coloneqq \E\left[\sum_{i=0}^{N-1} r(t_i,X_{t_i}^{\Delta t,\phi},\mu_\phi(t_i,X^{\Delta t,\phi}_{t_i}))\Delta t+ g(X_T^{\Delta t,\phi})\right],
\end{equation}
where 
$\Delta t= {T}/{N}$, 
$t_i=i\Delta t$, and 
$X^{\Delta t,\phi}$ satisfies the following time-discretization  of   \eqref{eq:state_mu}:
$$
X_{t_{i+1}}^{\Delta t,\phi}=
X_{t_{i}}^{\Delta t,\phi}+
b(t_i,X_{t_{i}}^{\Delta t,\phi},\mu_\phi(t_i,X_{t_{i}}^{\Delta t,\phi}))\Delta t+\sigma(t_i,X_{t_{i}}^{\Delta t,\phi},\mu_\phi(t_i,X_{t_{i}}^{\Delta t,\phi}))\sqrt{\Delta t} \omega_{t_i},
$$
and $(\omega_{t_i})_{i=0}^{N-1}$ are independent standard normal random variables. By the discrete-time DPG formula in \citep[Equation (9)]{silver2014deterministic}, 
\begin{equation}
\label{eq:dpg_discrete}
        \partial_\phi J_{\Delta t}(\phi) =
        \mathbb E\left[\sum_{i=0}^{N-1} \partial_\phi\mu_\phi(t_i,X_{t_{i}}^{\Delta t,\phi} )^\top
\partial_a  
A^{\Delta t, \phi}(t_i, X_{t_{i}}^{\Delta t,\phi},\mu_\phi(t_i,X_{t_{i}}^{\Delta t,\phi}))\Delta t
 \right],
\end{equation}
where $A^{\Delta t, \phi} (t,x,a)\coloneqq     \frac{Q^{\Delta t, \phi}(t,x,a)-V^{\Delta t, \phi}(t,x)}{\Delta t}$ is the (discrete-time) advantage function for \eqref{eq:cost_phi_delta_t}
normalized with the time stepsize. 
As $N\to \infty$, $A^{\Delta t, \phi}$ converges to  $A^\phi$, as shown in  \citep{jia2023q}. Sending $\Delta t\to 0$ in \eqref{eq:dpg_discrete} yields the continuous-time DPG in \Cref{prop:pg}.
\end{rmk}

\begin{rmk}
\label{rmk:spg_degenerate}
\Cref{prop:pg} is distinct from the stochastic policy gradient formulas for continuous-time RL given in \cite[Sectioin 3.2]{jia2022policy_grad}. Indeed, the stochastic policy gradient formulas there  involve  the gradient of the log-density of the stochastic policy. These formulas cannot be applied to a deterministic policy, whose action distributions are Dirac measures and therefore do not admit a density. See \Cref{sec:dpg_spg} for more details.

\end{rmk}

\subsection{Martingale Characterization of Continuous-Time Advantage Rate Function}

\Cref{prop:pg} suggests that implementing DPG 
for continuous RL 
relies on   learning the advantage rate function 
$A^\phi$ in a neighborhood of the   policy $\mu_\phi$. 
The following theorem establishes a martingale criterion to jointly characterize the value function and the advantage rate function. It provides the theoretical foundation for designing a model-free actor–critic learning algorithm for continuous RL problems (see Algorithm \ref{alg:ddpg_seq}).  
 
\begin{thm}
\label{thm:martingale_char}
 Suppose   \Cref{assum:wp,assum:differentiability} hold. Let  $\phi\in \sR^k$,    $\hat{V}\in {C}^{1,2}([0,T]\times\R^n)$ and $\hat{q}\in {C}([0,T]\times\R^n \times \mathcal{A})$ satisfy
    the following conditions
    for all $(t,x)\in[0,T]\times\R^n$: 
\begin{equation}\label{eq:hjb_condition}
    \hat{V}(T,x)=g(x),
    \quad \hat{q}(t,x,   \mu_\phi(t,x ))=0, 
    \end{equation}
and 
for  a given  neighborhood 
    $\mathcal O_{\mu_\phi(t,x)} \subset \mathcal A$
       of $\mu_\phi(t,x)$, it holds  for all $a\in \mathcal O_{\mu_\phi (t,x)}$ that, there exists a square-integrable $\mathcal A$-valued adapted 
  process $(\alpha_s)_{s\ge t}$  with   $\lim_{s\searrow t}\alpha_s =a$ almost surely and
\begin{equation}\label{eq:martingale}
     \left(  e^{-\beta (s-t)}\hat{V}(s,X_s^{t,x, a })+\int_t^s e^{-\beta (u-t)}
             (r- \hat{q}) (u,X_u^{t,x,a  },   \alpha_u) 
            \dif  u
           \right)_{s\in [t,T]} 
    \end{equation}
    is an $\mathbb{F}$-martingale, where 
   $X^{t,x, a }$ satisfies  for all $s\in [t,T]$,
   \begin{align}
\label{eq:martingale_state_mv}
\begin{split}
 \dif X^{t, x, a}_s &=b(s,X^{t, x, a}_s ,\alpha_s)\dif s+\sigma (s,X^{t, x,  a}_s,  \alpha_s)\dif W_s,
  \quad X^{t, x, a}_t=x. 
 \end{split} 
\end{align} 
  Then 
  $\hat V(t,x)=V^\phi(t,x)$
  and 
  $\hat{q}(t,x,a )=A^\phi(t,x,a)$
  for all $(t,x,a )\in [0,T]\times\R^n\times \mathcal O_{\mu_\phi(t,x)}$.

\end{thm}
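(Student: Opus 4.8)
The plan is to extract two things from the assumed martingale property: first, a pointwise identity that expresses $\hat q$ in terms of $\mathcal L[\hat V]$; and second, a Feynman--Kac type representation that pins $\hat V$ down as $V^\phi$. Combining them then forces $\hat q=A^\phi$. For the first part, fix $(t,x)\in[0,T)\times\R^n$ and $a\in\mathcal O_{\mu_\phi(t,x)}$, and take $X:=X^{t,x,a}$ and $(\alpha_s)$ as in the hypothesis (the SDE \eqref{eq:martingale_state_mv} is well posed for any square-integrable adapted $\alpha$ by the Lipschitz and linear-growth bounds of \Cref{assum:wp}). I would apply It\^o's formula to $s\mapsto e^{-\beta(s-t)}\hat V(s,X_s)$, which is legitimate since $\hat V\in C^{1,2}$, and observe that the process in \eqref{eq:martingale} equals its value at $s=t$ plus the stochastic integral $\int_t^{\cdot}e^{-\beta(u-t)}\partial_x\hat V(u,X_u)^\top\sigma(u,X_u,\alpha_u)\,\dif W_u$ plus the absolutely continuous term $\int_t^{\cdot}e^{-\beta(u-t)}\bigl(\mathcal L[\hat V]+r-\hat q\bigr)(u,X_u,\alpha_u)\,\dif u$. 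Since \eqref{eq:martingale} is assumed to be a true martingale and the stochastic integral is a continuous local martingale, their difference is a continuous local martingale of finite variation that vanishes at $t$, hence is indistinguishable from $0$; therefore $\bigl(\mathcal L[\hat V]+r-\hat q\bigr)(u,X_u,\alpha_u)=0$ for Lebesgue-a.e.\ $u\in[t,T]$, almost surely.

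Next I pass to the limit $u\searrow t$. On the almost sure event where the previous display holds, $\frac1{s-t}\int_t^s e^{-\beta(u-t)}\bigl(\mathcal L[\hat V]+r-\hat q\bigr)(u,X_u,\alpha_u)\,\dif u=0$ for every $s\in(t,T]$. By path-continuity of $X$, the hypothesis $\alpha_u\to a$ almost surely, and continuity of $\mathcal L[\hat V]$, $r$ and $\hat q$, the integrand converges almost surely to the deterministic constant $\bigl(\mathcal L[\hat V]+r-\hat q\bigr)(t,x,a)$ as $u\searrow t$, so the averages converge to the same constant. This yields $\hat q(t,x,a)=\mathcal L[\hat V](t,x,a)+r(t,x,a)$ for all $(t,x)\in[0,T)\times\R^n$ and $a\in\mathcal O_{\mu_\phi(t,x)}$, and the endpoint $t=T$ follows by continuity. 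In particular, setting $a=\mu_\phi(t,x)$ and using $\hat q(t,x,\mu_\phi(t,x))=0$ from \eqref{eq:hjb_condition} shows that $\hat V$ solves the linear Bellman equation \eqref{eq:pde_v}.

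To conclude $\hat V=V^\phi$, I would specialize the hypothesis to $a=\mu_\phi(t,x)$ together with the admissible choice $\alpha_s=\mu_\phi\bigl(s,X^{t,x,\mu_\phi(t,x)}_s\bigr)$ --- admissible by the growth and continuity of $\mu_\phi$ in \Cref{assum:wp} --- for which $X^{t,x,\mu_\phi(t,x)}$ is exactly the closed-loop state process $X^\phi$ started from $(t,x)$ and $\alpha_s\to\mu_\phi(t,x)$. Since $\hat q\bigl(s,X^\phi_s,\mu_\phi(s,X^\phi_s)\bigr)=0$ and $\hat V(T,\cdot)=g$, the process in \eqref{eq:martingale} reduces to $e^{-\beta(s-t)}\hat V(s,X^\phi_s)+\int_t^s e^{-\beta(u-t)}r(u,X^\phi_u,\mu_\phi(u,X^\phi_u))\,\dif u$; taking expectations (justified because the process is a genuine martingale) and equating the values at $s=t$ and $s=T$ reproduces exactly the defining formula \eqref{eq:value_t_x} for $V^\phi(t,x)$, so $\hat V(t,x)=V^\phi(t,x)$. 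Substituting $\hat V=V^\phi$ into the identity of the previous paragraph then gives $\hat q(t,x,a)=\mathcal L[V^\phi](t,x,a)+r(t,x,a)=A^\phi(t,x,a)$, which is the claim. Alternatively, $\hat V=V^\phi$ can be deduced from \eqref{eq:pde_v} by a standard verification argument, applying It\^o's formula to $e^{-\beta(s-t)}\hat V(s,X^\phi_s)$ along a localizing sequence and passing to the limit via the second-moment bounds on $X^\phi$; the martingale route is cleaner since the hypothesis supplies the integrability for free and one need not control the growth of $\hat V$ and its derivatives.

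The step I expect to require the most care is the passage $u\searrow t$: the Lebesgue-null set of exceptional times produced in the first step depends on $\omega$, so $\bigl(\mathcal L[\hat V]+r-\hat q\bigr)(t,x,a)=0$ cannot be read off pointwise and must be recovered through the averaged (fundamental-theorem-of-calculus) limit together with the almost sure right-continuity of the integrand at $t$ --- which is precisely where the assumption $\lim_{s\searrow t}\alpha_s=a$ is indispensable. A lesser technical point is justifying the It\^o expansion and the local-martingale cancellation without a priori growth control on $\hat V$, handled by the fact that \eqref{eq:martingale} is assumed to be a genuine martingale and that continuous local martingales of finite variation are constant.
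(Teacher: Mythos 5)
Your proposal is correct and follows the same overall skeleton as the paper's proof: apply It\^o's formula to $s\mapsto e^{-\beta(s-t)}\hat V(s,X^{t,x,a}_s)$, use the martingale hypothesis to conclude that the finite-variation part $\int_t^{\cdot}e^{-\beta(u-t)}(\mathcal L[\hat V]+r-\hat q)(u,X_u,\alpha_u)\,\dif u$ is a continuous local martingale of finite variation and hence vanishes, then identify the integrand at the initial time and finally pin down $\hat V$ via the terminal condition. You diverge from the paper in two sub-steps, both validly. First, to extract the pointwise identity $(\mathcal L[\hat V]+r-\hat q)(t,x,a)=0$ you use a Lebesgue-differentiation/averaging argument at the left endpoint, $\frac{1}{s-t}\int_t^s(\cdots)\dif u\to(\mathcal L[\hat V]+r-\hat q)(t,x,a)$, exploiting path continuity of $X$ and $\alpha_u\to a$; the paper instead argues by contradiction, assuming $f(\bar t,\bar x,\bar a)>0$ and constructing a stopping time $\tau>\bar t$ before which $f\ge\epsilon$ along the path, forcing a strictly positive integral. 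These are the same idea (right-continuity of the integrand at the initial time is what makes $\lim_{s\searrow t}\alpha_s=a$ indispensable, as you note), and your averaging version is arguably more direct. Second, for $\hat V=V^\phi$ you take the closed-loop choice $\alpha_s=\mu_\phi(s,X_s)$ (admissible by the Lipschitz/growth bounds, with $a=\mu_\phi(t,x)\in\mathcal O_{\mu_\phi(t,x)}$) and read off $\hat V(t,x)=V^\phi(t,x)$ by equating expectations of the martingale at $s=t$ and $s=T$, whereas the paper derives the linear Bellman PDE for $\hat V$ and invokes Feynman--Kac uniqueness. Your route buys a proof that does not rely on a separate uniqueness statement for the PDE (the martingale hypothesis supplies the needed integrability for free), while the paper's route keeps the argument entirely at the PDE level once the pointwise identity is in hand; you correctly observe both are available.
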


\Cref{thm:martingale_char} establishes   conditions ensuring that the functions $\hat V$ and $\hat q$ coincide with the value function and the advantage rate function of  a given policy $\mu_\phi$, respectively.
 Equation \eqref{eq:hjb_condition}
requires that $\hat V$ agrees with the terminal condition $h$ at time $T$,
and the function $\hat q$ 
satisfies the linear Bellman equation 
\eqref{eq:pde_v} 
as the true advantage rate $A^\phi$.
The martingale constraint \eqref{eq:martingale} ensures $\hat q$ is the advantage rate function associated with $\hat V$,  for all actions in a neighborhood of the policy $\mu_\phi$.

We highlight two features that distinguish the martingale characterization of the advantage rate function for deterministic policies from existing characterizations for stochastic policies \citep{jia2022policy_grad, jia2023q,zhao2023policy}. 
\begin{rmk}[Local exploration]
  \label{rmk:local_exploration}  
\Cref{thm:martingale_char} only requires the martingale condition \eqref{eq:martingale} to hold for state processes initialized with actions  $a \in \mathcal O_{\mu_\phi(t,x)}$,
which represent local exploration around the current action.
Such local exploration is sufficient because the DPG formula in \Cref{prop:pg}   depends only on the local behavior of the advantage rate function near the current action.  The resulting policy gradient updates are therefore more efficient than those based on stochastic policies,  which requires generating (suboptimal) exploratory actions across the entire action space to learn the advantage function for all possible actions.
 
In practice, one can use an exploration policy to generate these exploratory  actions around the current actions, which are then employed to learn the gradient of the target deterministic  policy. This parallels the central role of off-policy algorithms in discrete-time DPG methods \citep{lillicrap2015continuous,haarnoja2018soft}.
\end{rmk}

\begin{rmk}[Simplified Bellman equation] \label{rmk:DPGvsSPG}

The Bellman condition \eqref{eq:hjb_condition} only involves  pointwise evaluations of the deterministic policy $\mu_\phi$, and hence can be easily implemented using a simple reparameterization (see \eqref{eq:repara_q}). 
This leads to more efficient and stable learning of the advantage rate function compared with the stochastic policy approach, whose corresponding Bellman condition requires costly integrations over the continuous action space.

To see this, recall that \citep{jia2022policy_grad, jia2023q,zhao2023policy} study continuous-time RL with stochastic policies $\pi:[0,T]\times \sR^d\to \mathcal P(\mathcal A)$ by incorporating an additional entropy   term into the objective. \citep{jia2023q} characterizes the advantage rate function analogously to \Cref{thm:martingale_char}, replacing the Bellman condition \eqref{eq:hjb_condition} with
\begin{equation}\label{eq:consistency_stochastic}
\int_{\mathcal A}
\left(\hat q(t,x,a)-\gamma\log\pi(a|t,x)\right) \pi(  a|t,x) \dif a= 0,
\quad \forall (t,x)\in [0,T]\times \sR^n,
\end{equation}
where $\gamma>0$ is the entropy regularization coefficient, and requiring the martingale constraint \eqref{eq:martingale} to hold for all state dynamics starting at state $x$ at time $t$, with actions sampled randomly from $\pi$ at any time partition of $[0,T]$.
Implementing the condition \eqref{eq:consistency_stochastic} 
requires computing the integral over the   action space,
which is typically estimated by Monte-Carlo methods using  random actions from the policy $\pi$. This makes policy evaluation substantially noisier and more expensive, particularly with high-dimensional action spaces and general function approximators for   $\hat q$ and $\pi$.
As a result,   RL with stochastic policies suffers from training instability and slow convergence, as demonstrated  in experiments in   \Cref{sec:exp}. 

\end{rmk} 
\subsection{Connecting DPG  with Stochastic Policy Gradient}
\label{sec:dpg_spg}

In this section, we show analytically the connection between the DPG and the stochastic policy gradient (SPG).

We  
fix a deterministic policy $\mu_\phi:[0,T]\times \sR^n\to \mathcal A$,
and 
impose  the following regularity conditions on the coefficients and the   policy class.
In the sequel, 
for any $\alpha,\beta\in (0,1]$
and $f: [0,T]\times \sR^d\to \sR^m$,
we define the   quantity 
$$
\|f\|_{\alpha,\beta}
\coloneqq 
\sup_{t,t'\in [0,T],x,x'\in \sR^d}
\left(|f(t,x)|
+ \frac{|f(t,x)-f(t',x)|}{|t-t'|^\alpha}
+ \frac{|f(t,x)-f(t,x')|}{|x-x'|^\beta}\right).
$$

\begin{Assumption}
\label{assum:regularity_limit}
   \begin{enumerate}[(1)]
       \item $\mathcal A=\mathbb R^d$.  $\partial_a b$, $ \partial_a\sigma$ and $\partial_a r$ exist and are continuous.
    There exists  $\alpha\in (0,1)$ such that
    \begin{align}
    \|b\|_{{\alpha,1,1}}
    +\|\sigma\|_{\alpha,1,1}
    +\|r\|_{\alpha,1,1}
    <\infty,
    \\
    \sup_{t\in[0,T]}\left(\|\partial_a b(t,\cdot)\|_{ \alpha,1}
    +\|\partial_a \sigma(t,\cdot) \|_{\alpha,1}
    +\|\partial_a r(t,\cdot)\|_{\alpha,1}\right)
<\infty,
\end{align}
  and 
     $g\in C^{2+\alpha}_b(\sR^n)$.
     There exists $
     \kappa>0$ such that 
  $\xi^\top\sigma\sigma^\top(t,x,a) \xi\ge \kappa |\xi|^2$ for all $\xi\in \sR^n$. 
  \item 
  $\partial_\phi \mu$ exists and is continuous. 
  There exists $\alpha\in (0,1)$, and
  a locally bounded function $\rho:[0,\infty)\to [0,\infty) $  such that for all $\phi\in  \sR^k$, 
  \begin{equation}
  \label{eq:policy_holder}
  \|\mu_{\phi}\|_{\alpha,1}
  +\sup_{t\in [0,T]}\|\partial_\phi\mu_{\phi}(t,\cdot)\|_{\alpha}\le \rho(|\phi|).
  \end{equation}

   \end{enumerate} 
\end{Assumption}

 \begin{rmk}
      Assumption \ref{assum:regularity_limit} implies 
Assumptions \ref{assum:wp} and \ref{assum:differentiability},  
hence the DPG formula in  \Cref{prop:pg} holds. In particular,  
by the non-degeneracy   of the diffusion coefficient and 
  Schauder’s estimate \cite[Theorem 9.2.3]{krylov1996lectures}, 
  for each $\phi\in \sR^k$, the value function $V^\phi$ in \eqref{eq:value_t_x} is  a classical solution to the PDE \eqref{eq:pde_v} 
and in the   space $C^{1+\alpha/2, 2+\alpha}_b([0,T]\times \sR^n)$. 
 \end{rmk}

We now consider a family of stochastic policies  $\pi_{\phi, \tau}:[0,T]\times \sR^n\to \mathcal P(\mathcal A)$,  indexed by  $\tau>0$, 
which   
converge to  $\mu_\phi$ as $\tau\to 0$. 
These policies can arise by adding independent exploration noise to the policy $\mu_\phi$, and $\tau$
  specifies the magnitude of this exploration    (see \Cref{ex:exploration_policy}).

For each stochastic policy $\pi_{\phi,\tau}$, 
we consider the associated state process    $\tilde X^{\phi,\tau}$ governed by the  following dynamics:
\begin{equation}
\label{eq:aggregated_sde}
\dif   X_t = \tilde b^{\pi_{\phi,\tau}}(t, X_t )\dif t + \tilde \sigma^{\pi_{\phi,\tau}}(t, X_t )\dif W_t,
\quad t\in [0,T]; 
\quad   X_0 =\xi_0,
\end{equation}
where  the coefficients  $\tilde b^{\pi_{\phi,\tau}}:[0,T]\times \sR^n\to \sR^n $ and $\tilde \sigma^{\pi_{\phi,\tau}}: [0,T]\times \sR^n\to \sR^{n\times n} $    are defined  by
\begin{equation}
	\label{eq:coef_aggregate}
	\begin{aligned}
		& \tilde b^{\pi_{\phi,\tau}}(t,x) \coloneqq \int_{\mathcal A} b(t,x,a)\pi_{\phi,\tau}( \dif a | t,x)  ,\quad \tilde \sigma^{\pi_{\phi,\tau}}(t,x)    \coloneqq \sqrt{\int_{\mathcal A} \Sigma(t,x,a)\pi_{\phi,\tau}(\dif a | t,x) },
	\end{aligned}
\end{equation}
where $\Sigma=\sigma\sigma^\top$
and  $(\cdot)^{1/2}$
is the   principal square root of positive semidefinite matrices.
  The    dynamics
\eqref{eq:aggregated_sde}
has been proposed in  \cite{wang2020reinforcement}, and 
 referred to as the exploratory dynamics. It models interacting with the system \eqref{eq:control_dynamics} by   sampling random actions according
to the   policy $\pi_{\phi,\tau}$; see \cite{jia2025accuracy} for   more details on the connection between    the exploratory dynamics and controlling \eqref{eq:control_dynamics} with
random actions. 
Suppose that \eqref{eq:aggregated_sde}
admits a  unique strong solution $\tilde X^{\phi,\tau}$, 
we  define  the associated value function 
\begin{align}
\label{eq:value_aggregate}
 \tilde V^{\phi,\tau} (t,x ) \coloneqq \mathbb E \left[
\int_t^T 
e^{-\beta (s-t)}
\tilde r^{\pi_{\phi,\tau}}(s,\tilde X_s^{\phi,\tau})\dif s+e^{-\beta (T-s)}g(\tilde X_T^{\phi,\tau})
\,\bigg\vert\, \tilde X^{\phi,\tau}_t=x
\right],  
\end{align}
where 
$\tilde r^{\pi_{\phi,\tau}} :[0,T]\times \sR^n\to \sR$ is defined by 
\begin{equation}
	\label{eq:reward_aggregate}
	\begin{aligned}
		& \tilde r^{\pi_{\phi,\tau}}(t,x) \coloneqq \int_{\mathcal A} r(t,x,a)\pi_{\phi,\tau}(\dif a | t,x).
	\end{aligned}
\end{equation}
 
  In the sequel,    we assume that  the stochastic policy $ \pi_{\phi,\tau}(\dif a|t,x) $ admits a sufficiently  regular density and asymptotically concentrates on $\mu_\phi(t,x)$ as follows.
 
\begin{Assumption}
\label{assum:stochastic_policy}
For all $\phi\in \sR^k$, $\tau>0$ and $(t,x)\in [0,T]\times \sR^n$,
 $\pi_{\phi,\tau}(\dif a|t,x)$  is absolutely continuous 
 with respect to the Lebesgue measure   $\dif a$, and 
  the density $\pi_{\phi,\tau}(t,x,a)$ 
  is of the form:
\begin{align}
\label{eq:density}
  \pi_{\phi,\tau}(t,x,a)=\chi_\tau(a-\mu_\phi(t,x)),
  \end{align}
  where $\chi_\tau:\sR^d\to [0,\infty)$
 is   an integrable function such that 
 \begin{enumerate}[(1)]
        \item \label{item:chi_regularity}
       $\chi_\tau$ is continuously differentiable on its support,
the derivative $\chi'_\tau $ is integrable,  and  
       $\lim_{R\to \infty}\int_{\partial B_R}\chi_\tau    \dif S=0$,
       where $B_R $ is the sphere of radius $R$,  
          and $\dif S$ is the  surface measure   on the boundary $\partial B_R$.
       \item 
      There exists   $C_\chi>0$ such that for all $\tau>0$ and $u,v\in \sR^d$,
            \begin{align}
       \sup_{f\in \operatorname{BL}_{1}(\sR^d)}    
       \left(\int_{\sR^d}f(a)\chi_\tau(a-u)\dif a-\int_{\sR^d}f(a)\chi_\tau(a-v)\dif a\right)&\le C_\chi|u-v|,
       \label{eq:lips_weak}
       \\
       \lim_{\tau \searrow   0}\sup_{f\in \operatorname{BL}_{1}(\sR^d)}    
       \left(\int_{\sR^d}f(a)\chi_\tau(a-u)\dif a-f(u)\right)&=0,
       \label{eq:conv_weak}
       \end{align}
      where  $\operatorname{BL}_{1}(\sR^d)$
      consists of  all  functions $f:\sR^d\to \sR$ such that 
      for all 
      $a,a'\in \sR^d$,
      $|f(a)|\le 1$ and 
      $|f(a)-f(a')|\le |a-a'|$. 
      
    \end{enumerate}

\end{Assumption}

\begin{rmk}
Assumption \ref{assum:stochastic_policy}
is analogous to the  assumptions  imposed in \cite{silver2014deterministic} to analyze the relationship between  DPG  and SPG  for discrete-time Markov decision processes with compact state spaces.
Condition 
\eqref{eq:density}   allows expressing the gradient   $\partial_\phi\tilde V^{\phi,\tau}$
using       $\partial_\phi \mu_\phi$; see \Cref{lemma:spg}.
Conditions \eqref{eq:lips_weak}
and \eqref{eq:conv_weak}
are  equivalent to 
$ \dif_{\rm BL}(\chi_\tau(a-u)\dif a, \chi_\tau(a-v)\dif a)\le C_\chi|u-v|$
and 
 $\lim_{\tau\to 0}\dif_{\rm BL}(\chi_\tau(a-u)\dif a,\delta_u)=0$,
 where 
 $\dif_{\rm BL}$ denotes  the Bounded Lipschitz metric  on the space $\mathcal P(\sR^d)$ (see~\eqref{eq:BL_metric}),
and metrizes the topology of weak convergence on $\mathcal P(\sR^d)$. 
\end{rmk}

The following example shows that Assumption \ref{assum:stochastic_policy} is satisfied by stochastic policies induced by adding independent exploration noises to the deterministic policy
 (see e.g., \cite{szpruch2024optimal} for the case of Gaussian exploration noise).

\begin{example}
\label{ex:exploration_policy}
For each $\tau>0$, 
let    $\pi_{\phi,\tau}(\dif a|t,x)= \sP_{\mu_\phi(t,x)+\tau \xi}$,
where $\xi$ is an $\sR^d$-valued random variable with a  density $h$,
and $\sP_U$ is the law of the random variable $U$. 
Then $\{\pi_{\phi,\tau}\}_{\tau >0}$ satisfies 
\Cref{assum:stochastic_policy} provided that $h$
 enjoys appropriate regularity properties.

\end{example}

We are now ready to establish the consistency between the DPG and SPG formulas, when the stochastic policies asymptotically concentrate on the deterministic policy.

 \begin{thm}\label{thm:spg_limit}
      Suppose \Cref{assum:regularity_limit,assum:stochastic_policy} hold.
      For all   $(t,x)\in [0,T]\times \sR^n$ and $ \phi \in \sR^k$,
      $$
      \partial_\phi V^\phi(t,x)=
      \lim_{\tau\to 0}\partial_\phi \tilde V^{\phi,\tau}(t,x).
      $$
 \end{thm}

\Cref{thm:spg_limit} extends    \cite[Theorem 2]{silver2014deterministic}, which was established for discrete-time Markov decision processes, to the present continuous-time setting. The proof requires  delicate PDE arguments for the convergence of the associated value functions and their derivatives.

\section{Algorithm Design and Analysis }\label{sec:algorithms}

\subsection{Algorithm Design}

Given the martingale characterization  (\Cref{thm:martingale_char}), we now discuss the implementation details in a continuous-time RL framework via  deep neural networks. We use $V_\theta,q_\psi,\mu_\phi$ to denote the neural networks for value, advantage rate function and policy, respectively.

\paragraph{Martingale loss.}
To ensure the martingale condition \eqref{eq:martingale}, let $M_t=e^{-\beta t}V_\theta(t,x_t)+\int_0^te^{-\beta s}
            [r(s,x_s,a_s)-q_\psi(s,x_s,a_s)]\dif s.$
We adopt the following martingale orthogonality conditions (also known as generalized moment method) 
     $\E\left[\int_0^T\zeta_t\dif M_t\right]=0$,
where $\bm{\zeta}=(\zeta_t)_{[0,T]}$ is any test function. This is both necessary and sufficient  to ensure the martingale condition for    all $\mathbb{F}$-adapted and square-integrable processes $\bm{\zeta}$ \citep{jia2022policy_eval,bo2025optimal}.


In theory, one should consider all possible test functions, which leads to   infinitely many equations.
For practical implementation, however, it suffices to select a finite number of test functions with special structures.
A natural choice is to set $\zeta_t=\partial_\theta V_\theta(t,x_t)$ or $\zeta_t=\partial_\psi q_\psi(t,x_t,a_t)$, in which case 
the marginal orthogonality condition 
becomes a vector-valued condition.
The classic stochastic approximation method \citep{robbins1951stochastic} can be applied to solve the equation:
\begin{equation*}
    \theta\leftarrow\theta-\eta\partial_\theta V_\theta(t,x_t) \cdot\Big(V_\theta(t,x_t)-\int_t^{t+\delta} e^{-\beta (s-t)}
            [r(s,x_s,a_s)-q_\psi(s,x_s,a_s)]\dif s-e^{-\beta\delta}V_\theta(t+\delta,x_{t+\delta})\Big), 
\end{equation*}
\begin{equation*}
    \psi\leftarrow\psi-\eta\partial_\psi q_\psi(t,x_t,a_t) \cdot\Big(V_\theta(t,x_t)-\int_t^{t+\delta} e^{-\beta (s-t)}
            [r(s,x_s,a_s)-q_\psi(s,x_s,a_s)]\dif s-e^{-\beta\delta}V_\theta(t+\delta,x_{t+\delta})\Big), 
\end{equation*}
where $\delta>0$ is the integral interval and the trajectory is sampled from collected data.
Note that the update formula above is also referred as semi-gradient TD method in RL  \citep{sutton1998reinforcement}.

\paragraph{Bellman constraints.}
To enforce \eqref{eq:hjb_condition}, we re-parameterize the advantage rate function as
\begin{equation}\label{eq:repara_q}
    q_\psi(t,x,a):=\bar{q}_\psi(t,x,a)-\bar{q}_\psi(t,x,\mu_\phi(t,x)),
\end{equation}
where $\bar{q}_\psi$ is a neural network and $\mu_\phi$ denotes the current deterministic policy \citep{tallec2019making}.

In practice, it is often challenging to design a neural network structure that directly enforces the terminal value constraint. To address this, we add a penalty term of the form: $\E (V_\theta(T,x_T)-g(x_T))^2$, where $x_T,g(x_T)$ are sampled from  collected trajectories.

\paragraph{Implementation with discretization.}
Let $h$ denote the discretization step size.
We denote by $\tilde{x}_t$ the concatenation of time and state $(t,x_t)$ for compactness.
The full procedure of \textbf{C}ontinuous \textbf{T}ime \textbf{D}eep \textbf{D}eterministic \textbf{P}olicy \textbf{G}radient (CT-DDPG) is summarized in \Cref{alg:ddpg_seq}. 

We employ several training techniques widely used in modern deep RL algorithms such as DDPG and SAC.
In particular, we employ a target value network $V_{\theta^{tgt}}$, defined as the exponentially moving average of the value network weights. 
This technique has been shown to improve training stability in deep RL.\footnote{Here we focus on a single target value network as our primary goal is to study the efficiency of deterministic policies in continuous-time RL. Extensions with multiple target networks \citep{fujimoto2018addressing} can be readily incorporated.} 
We further adopt a replay buffer to store transitions in order to improve sample efficiency. 
For exploration, we add independent Gaussian   noises to the deterministic policy $\mu_\phi$.

\paragraph{Multi-step TD.}
When training advantage-rate net and value net, we adopt multiple steps $L>1$ to compute the temporal difference (TD) error (see \eqref{eq:martingale_loss}), so that the objective is defined as
\begin{equation}\label{eq:multi_step_TD_CT}
    \min_{\theta,\psi}\E\Big(V_\theta(\tilde{x}_t)-\sum_{k=0}^{L-1}e^{-\beta kh}[r(\tilde{x}_{t+kh},a_{t+kh})-q_\psi(\tilde{x}_{t+kh},a_{t+kh})]-e^{-\beta Lh}V_{\theta^{tgt}}(\tilde{x}_{t+Lh})\Big)^2.
\end{equation}
This is different from the standard off-policy formulation in discrete time RL algorithms, which typically rely on a single transition step.
Notably, when $L=1$, our algorithm reduces to DAU \citep[Alg. 2]{tallec2019making} except that their policy learning rate vanishes as $h\to 0$.
We highlight that multi-step TD is essential for the empirical success of CT-DDPG (see \Cref{sec:exp}).

\begin{algorithm}[!htbp]
    \caption{\textbf{C}ontinuous \textbf{T}ime \textbf{D}eep \textbf{D}eterministic \textbf{P}olicy \textbf{G}radient}
    \label{alg:ddpg_seq}
    \begin{algorithmic}
        \STATE{{\bfseries Inputs:} Discretization step size $h$, horizon $K=T/h$, discount rate $\beta$, number of episodes $N$,  policy net $\mu_\phi$, advantage-rate net $\bar{q}_\psi$, value net $V_\theta$, update frequency $m$, trajectory length $L$, exploration noise $\sigma_{\text{explore}}$, soft update parameter $\tau$, learning rate $\eta$, batch size $B$, terminal value constraint weight $\alpha$}
        \STATE{
            {\bfseries Learning Procedures:}
            \STATE{
                Initialize $\phi,\psi,\theta$, target $\theta^{tgt}=\theta$, and replay buffer $\mathcal{R}$
            }
            \FOR{$n = 1, \cdots, N$}
                \STATE{Observe the initial state $\tilde{x}_0$}
                \FOR{$k=1,\cdots,K$}
                    \STATE{
                        Perform $a_{kh}\sim \mathcal{N}(\mu_\phi(\tilde{x}_{kh}),\sigma_{\text{explore}}^2)$ and collect $r_{kh}, \tilde{x}_{(k+1)h}$
                    }
                    \STATE{
                        Store $(\tilde{x}_{kh},a_{kh},r_{kh}, \tilde{x}_{(k+1)h})$ in $\mathcal{R}$
                    }
                    \IF{$k \equiv 0 \textbf{ mod } m$}
                    \STATE{$\triangleright$\colorbox{Ocean}{\emph{train advantage rate function and value function}}}
                    
                    \STATE{
                        Sample a batch of trajectories $\{\tilde{x}_{k_ih:(k_i+L)h}^{(i)}, a_{k_ih:(k_i+L)h}^{(i)}, r_{k_ih:(k_i+L)h}^{(i)}\}_{i=1}^B$ from $\mathcal{R}$
                    }
                    \STATE{
                        Define $q_\psi(\tilde{x},a):=\bar{q}_\psi(\tilde{x},a)-\bar{q}_\psi(\tilde{x},\mu_\phi(\tilde{x}))$
                    }
                    \STATE{
                        Compute the martingale loss \begin{equation}\label{eq:martingale_loss}
                            \mathcal{L}^{M}=\frac{1}{B}\sum_{i=1}^B \Big(V_\theta(\tilde{x}_{k_ih}^{(i)})-\sum_{l=0}^{L-1}e^{-\beta lh}[r_{(k_i+l)h}^{(i)}-q_\psi(\tilde{x}_{(k_i+l)h}^{(i)},a_{(k_i+l)h}^{(i)})]h-e^{-\beta Lh}V_{\theta^{tgt}}(\tilde{x}_{(k_i+L)h}^{(i)})\Big)^2
                        \end{equation}
                    }
                    
                    \STATE{
                        Sample a batch of terminal states $\{\tilde{x}_{Kh}^{(i)}, r_{Kh}^{(i)}\}_{i=1}^B$ from $\mathcal{R}$
                    }
                    \STATE{
                        Compute the terminal value constraint $\mathcal{L}^C=\frac{1}{B}\sum_{i=1}^B(V_\theta(\tilde{x}_{Kh}^{(i)})-r_{Kh}^{(i)})^2$
                    }
                    \STATE{
                        Update the critic: $\psi\leftarrow\psi-\eta\partial_\psi (\mathcal{L}^M+\alpha\mathcal{L}^C)$,
                        $\theta\leftarrow\theta-\eta\partial_\theta (\mathcal{L}^M+\alpha\mathcal{L}^C)$
                    }
                    \STATE{$\triangleright$\colorbox{Ocean}{\emph{train policy}}}
                    
                    \STATE{
                        Sample a batch of states $\{\tilde{x}_{k_ih}^{(i)}\}_{i=1}^B$ from $\mathcal{R}$
                    }
                    \STATE{
                        Compute the policy loss $\mathcal{L}=\frac{1}{B}\sum_{i=1}^B\bar{q}_\psi(\tilde{x}_{k_ih}^{(i)},\mu_\phi(\tilde{x}_{k_ih}^{(i)}))h$
                    }
                    \STATE{
                        Update the actor: $\phi\leftarrow \phi+\eta \partial_\phi\mathcal{L}$
                    }
                    \STATE{
                        Update the target: 
                        $\theta^{tgt}\leftarrow \tau \theta+(1-\tau)\theta^{tgt}$
                    }
                    \ENDIF
                \ENDFOR
            \ENDFOR
        }
    \end{algorithmic}
\end{algorithm}

Indeed, multi-step TD objective has also been studied in discrete-time RL \citep{hessel2018rainbow}. Given a
stochastic policy $\pi(\cdot|\cdot)$ (possibly deterministic) and a horizon $L > 1$, the $TD(L)$ objective is defined as
\begin{equation}\label{eq:multi_step_TD}
    \min_\psi\E\Big(Q_\psi(\tilde{x}_t,a_t)-\sum_{k=0}^{L-1}e^{-\beta kh}r(\tilde{x}_{t+kh},a_{t+kh})-e^{-\beta Lh}Q_{\psi^{tgt}}(\tilde{x}_{t+Lh},a_{t+Lh})\Big)^2.
\end{equation}
Here $a_{t+kh}\sim\pi(\cdot|\tilde{x}_{t+kh})$ for any $k\geq 1$. However, in the off-policy setting, the trajectory $\{(\tilde{x}_{t+kh},\tilde{a}_{t+kh})\}_{1\leq k\leq L-1}$ is typically drawn from the replay buffer, i.e., the historical data set, which leads to a distribution shift issue. By contrast, the objective \eqref{eq:multi_step_TD_CT} used in CT-DDPG avoids this issue by introducing an additional advantage-rate function, thereby enabling the use of historical trajectories from the replay buffer to improve sample efficiency.

In the following section, we will theoretically demonstrate that one-step TD inevitably leads to gradient variance blow-up in the limit of vanishing discretization step, thereby slowing convergence.
This is, to the best of our knowledge,  the first theoretical analysis regarding the stability of multi-step TD for continuous-time RL problems. 

\subsection{Analysis of Multi-Step TD in Continuous RL}\label{subsec:var_blow_up}

When training the value function $V_\theta$ and the advantage function $A_\psi$
for   a given policy (stochastic or deterministic),  \textit{Temporal Difference} algorithms \citep{haarnoja2018soft,tallec2019making,jia2023q} typically use a one-step semi-gradient:
\begin{equation}\label{eq:semi_grad}
    \begin{aligned}
        &G_{\theta,h}:= \frac{1}{h}\E\left[\partial_\theta V_\theta(\tilde{x}_t)\left(V_\theta(\tilde{x}_t)-(r_t-A_\psi(\tilde{x}_t,a_t))\cdot h-e^{-\beta h}V_\theta(\tilde{x}_{t+h})\right)\right], \\
        &G_{\psi,h}:=\frac{1}{h}\E\left[\partial_\psi A_\psi(\tilde{x}_t,a_t)\left(V_\theta(\tilde{x}_t)-(r_t-A_\psi(\tilde{x}_t,a_t))\cdot h-e^{-\beta h}V_\theta(\tilde{x}_{t+h})\right)\right],
    \end{aligned}
\end{equation}
where $t\sim TruncExp(\beta;T)$ and $x_t\sim X_t^{\pi'}, a_t\sim\pi'(\cdot|t,x_t)$ with an exploration policy $\pi'$.
In practice, however, one has to use stochastic gradient:
\begin{equation}
    \begin{aligned}
        &g_{\theta,h}:=\frac{1}{h}\left[\partial_\theta V_\theta(\tilde{x}_t)\left(V_\theta(\tilde{x}_t)-(r_t-A_\psi(\tilde{x}_t,a_t))\cdot h-e^{-\beta h}V_\theta(\tilde{x}_{t+h})\right)\right]. \\
        &g_{\psi,h}:=\frac{1}{h}\left[\partial_\psi A_\psi(\tilde{x}_t,a_t)\left(V_\theta(\tilde{x}_t)-(r_t-A_\psi(\tilde{x}_t,a_t))\cdot h-e^{-\beta h}V_\theta(\tilde{x}_{t+h})\right)\right].
    \end{aligned}
\end{equation}
This one-step TD risk variance blow-up as the next proposition shows.
\begin{thm}
    
\label{prop:var_blow_up}
    {Assume that $\partial_\theta V_\theta, \|\partial_xV_\theta\|_{\sigma\sigma^\top}$ are not identically zero}. 
    Then the variance of stochastic gradient estimator blows up in the sense that:
    \begin{equation}
        \lim_{h\to 0}\E[g_{\theta,h}]=\lim_{h\to 0}G_{\theta,h}=\Theta(1), \quad 
        \lim_{h\to 0}\E[g_{\psi,h}]=\lim_{h\to 0}G_{\psi,h}=\Theta(1),
    \end{equation}
    \begin{equation}
        \lim_{h\to 0}h\cdot\Var(g_{\theta,h})=\Theta(1), \quad  \lim_{h\to 0}h\cdot\Var(g_{\psi,h})=\Theta(1).
    \end{equation}
\end{thm}

In contrast, \Cref{alg:ddpg_seq} utilizes $L$-step TD loss with (stochastic) semi-gradient (for simplicity of the theoretical analysis, we consider hard update of target, i.e., $\tau=1$):
\begin{equation}\label{eq:semi_grad_seq}
    \hspace{-2mm}G_{\theta,h,L}=\E\big[\partial_\theta V_\theta(\tilde{x}_t)\big(V_\theta(\tilde{x}_t)-\sum_{l=0}^{L-1}e^{-\beta lh}[r_{t+lh}-q_\psi(\tilde{x}_{t+lh},a_{t+lh})]h-e^{-\beta Lh}V_\theta(\tilde{x}_{t+Lh})\big)\big],
\end{equation}
\begin{equation}
    g_{\theta,h,L}=\partial_\theta V_\theta(\tilde{x}_t)\big(V_\theta(\tilde{x}_t)-\sum_{l=0}^{L-1}e^{-\beta lh}[r_{t+lh}-q_\psi(\tilde{x}_{t+lh},a_{t+lh})]h-e^{-\beta Lh}V_\theta(\tilde{x}_{t+Lh})\big).
\end{equation}

\begin{thm}\label{prop:var_normal}
Under the same assumptions in \Cref{prop:var_blow_up}, if $Lh\equiv\delta>0$, then the expected gradient does not vanish in the sense that 
\begin{equation}
    \lim_{h\to 0}\E[g_{\theta,h,\frac{\delta}{h}}]=\lim_{h\to 0}G_{\theta,h,\frac{\delta}{h}}=\Theta(1).
\end{equation}
In addition, the variance of stochastic gradient does not blow up: $\overline{\lim_{h\to 0}}\Var(g_{\theta,h,\frac{\delta}{h}})=\mathcal{O}(1)$.
\end{thm}

\begin{rmk}[Effect of $1/h$ scaling]
    Note that  \eqref{eq:semi_grad_seq}  drops the $1/h$ factor in contrast to \eqref{eq:semi_grad}. This modification is crucial for preventing the variance from blowing up. If we were to remove the $1/h$ factor in \eqref{eq:semi_grad}, then according to \Cref{prop:var_blow_up} the expected gradient $G_{\theta,h}$ would vanish as $h\to 0$. This theoretical inconsistency reveals a fundamental drawback of one-step TD methods in the continuous-time RL framework, which is also verified in our experiments.
\end{rmk}

\begin{rmk}[Previous analysis of one-step TD]
    \citep{jia2022policy_eval} discussed one-step TD objective
    \begin{equation}\label{eq:one_step_TD_square}
        \min_\theta \frac{1}{h^2}\E_{\tilde{x}}\left(V_\theta(\tilde{x}_t)-r_t\cdot h-e^{-\beta h}V_\theta(\tilde{x}_{t+h})\right)^2,
    \end{equation}
    showing that its minimizer does not converge to the true value function as $h\to 0$.
    However, practical one-step TD methods do not directly optimize \eqref{eq:one_step_TD_square}, but rather employ the semi-gradient update \eqref{eq:semi_grad}.
    Consequently, the analysis in \citep{jia2022policy_eval} does not fully explain the failure of discrete-time algorithms under small discretization steps. 
    In contrast, our analysis is consistent with the actual update rule and thus offers theoretical insights of continuous-time algorithmic designs.
\end{rmk}

\section{Numerical Experiments}\label{sec:exp}

The goal of our numerical experiments is to evaluate the efficiency of the proposed CT-DDPG algorithm in general continuous time-and-state RL framework, especially in comparison with the stochastic policies,  in terms of convergence speed, training stability and robustness with respect to varing discretization steps and noise levels.

\subsection{Linear Quadratic Regulator}
Consider the following one-dimensional linear dynamic
\begin{equation}\label{eq:lq_dynamic}
    \dif X_t=(AX_t+B\alpha_t)\dif t + (CX_t+D\alpha_t)\dif W_t, \quad X_0\sim\mathrm{Unif}([a,b]),
\end{equation}
with a quadratic reward function
\begin{equation}\label{eq:lq_reward}
    \ J:=-\E\left[\int_0^T (QX_t^2+R\alpha_t^2+2SX_t\alpha_t)\dif t+G(X_T-w)^2\right],
\end{equation}
for some constants   $A, B, C, D, Q, R, S, G$   given below. The optimal value function can be explicitly computed by solving a Riccati equation and the corresponding optimal policy is linear in state.

We evaluate CT-DDPG against the stochastic policy-based learning methods in \cite{jia2023q} under two settings: (i) \emph{model-aware}, where the LQ structure is known and the policy and value function are parameterized according to the corresponding optimal form; and (ii) \emph{model-agnostic}, where the LQ structure is unknown and all functions are parameterized using neural networks. In both settings, the specific parameters in \eqref{eq:lq_dynamic}-\eqref{eq:lq_reward} are neither known nor accessible.

For model-aware LQ, we set $A=C=0,B=0.1,D=0.2$ and $Q=R=S=0,G=1,w=1$, following \citep{jia2023q}. In this case, the   q-learning algorithm in \citep{jia2023q} can be implemented since the advantage rate function (i.e., q-function in \citep{jia2023q}) is quadratic and the corresponding policy is Gaussian.
The results are shown in \Cref{fig:mv}.
For relatively large stepsize ($h=10^{-3}$), q-learning exhibits faster converges as it explicitly exploits the LQ structure, whereas CT-DDPG must perform policy improvement through policy gradient estimation.
However, when the stepsize becomes extremely small ($h=10^{-5}$), the performance of q-learning deteriorates since it relies on one-step TD to learn the advantage rate function and thus suffers from variance explosion issues described in \Cref{prop:var_blow_up}. In contrast, CT-DDPG employs multi-step TD with $L=100$ and is therefore more robust to the discretization issue. 
\begin{figure}[ht]
    \centering
    \includegraphics[width=0.7\linewidth]{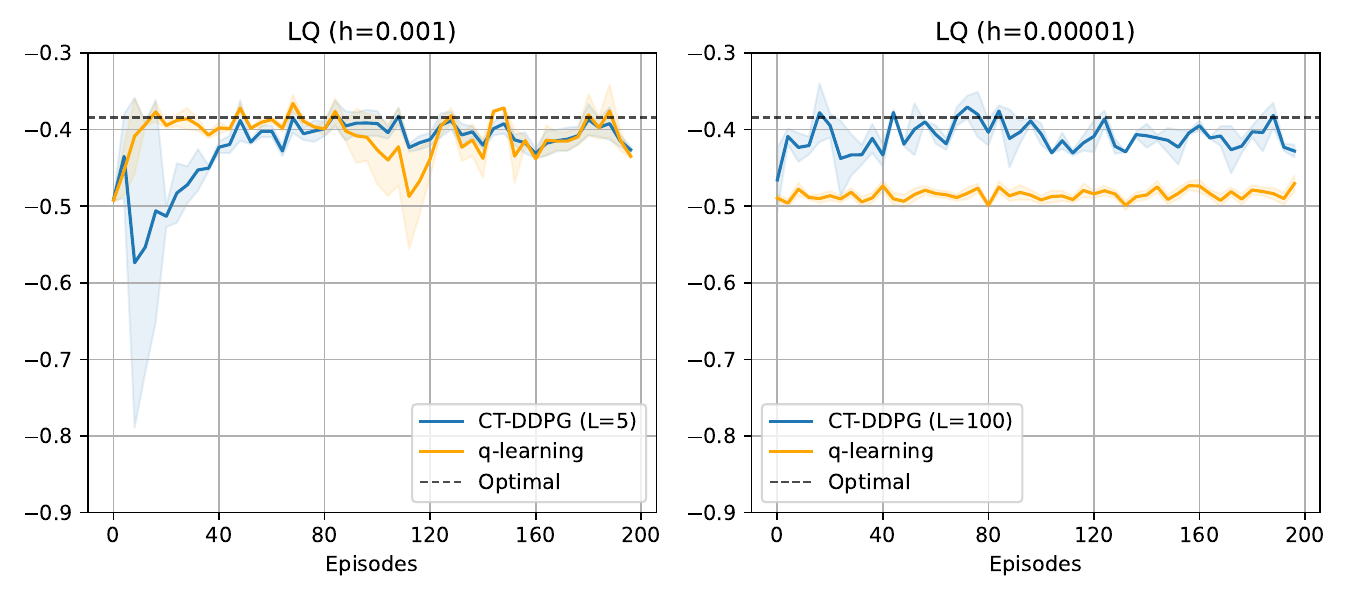}
    \caption{Model-aware LQ with linear policy.}
    \label{fig:mv}
\end{figure}

For model-agnostic LQ, we set $A=1,B=C=0.1,D=0.2$ and $Q=R=S=1,G=0$ and implement policy, advantage rate and value function as three layer fully connected ReLU MLPs with $32$ hidden units. In this case, q-learning cannot be directly implemented and one has to apply stochastic policy gradient (CT-SPG, see Section 5 in \citep{jia2023q}). The results are shown in \Cref{fig:lq}.
CT-DDPG outperforms the CT-SPG counterpart regardless of the choices of the step size $h$. In all cases, stochastic policies fail to converge to the optimal policy since the Bellman consistency \eqref{eq:consistency_stochastic} cannot be strictly imposed.  

\begin{figure}[ht]
    \centering
    \includegraphics[width=0.7\linewidth]{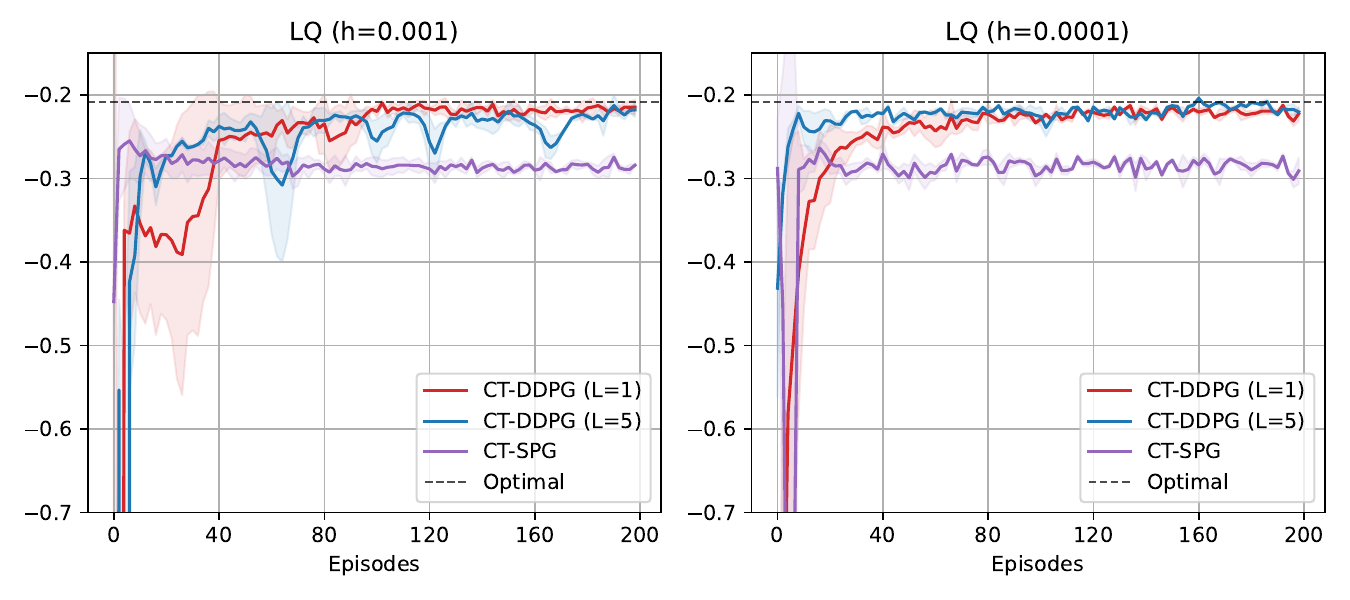}
    \caption{Model-agnostic LQ with neural network parameterized policy.}
    \label{fig:lq}
\end{figure}

\subsection{Robotic Control}

\paragraph{Environments.}
We evaluate on a suite of challenging continuous-control benchmarks from Gymnasium \citep{towers2024gymnasium}: \textit{Pendulum-v1}, \textit{HalfCheetah-v5}, \textit{Hopper-v5}, and \textit{Walker2d-v5}, sweeping the discretization step and dynamic noise levels. To model stochastic dynamics, at each simulator step we sample the i.i.d. Gaussian noises $\xi\sim \mathcal{N}(0,\sigma^2 I)$ and write it to MuJoCo’s \texttt{qfrc\_applied} buffer \citep{todorov2012mujoco}, thereby perturbing the equations of motion. We set \texttt{terminate\_when\_unhealthy=False} for MuJoCo environments. To accelerate training, we run 8 environments in parallel, i.e., collecting 8 trajectories per episode. 

\paragraph{Model architecture.}
Across all experiments, the policy, Q-network, and value network are implemented as three-layer fully connected MLPs with ReLU activations. The hidden dimension is set to 400, except for \textit{Pendulum}, where we use 64. To incorporate time information, we augment the environment observations with a sinusoidal embedding, yielding $\tilde{x}_t=(x_t,\cos(\tfrac{2\pi t}{T}),\sin(\tfrac{2\pi t}{T}))$, where $T$ denotes the maximum horizon. For stochastic policies, we employ Gaussian policies with mean and variance parameterized by neural networks, and fix the entropy coefficient to $\gamma=0.1$.

\paragraph{Training hyperparameters.}
We use the Adam optimizer with a learning rate of $3\times 10^{-4}$ for all networks ($3\times 10^{-3}$ for \textit{Pendulum}), and a batch size of $B=256$. The update frequency is $m=1$ in the original environment and $m=5$ for smaller step sizes $h$. The discount rate is set to $\beta=0.8$, applied in the form $e^{-\beta h}$. The soft target update parameter is $\tau=0.005$. The weight for the terminal value constraint is $\alpha=0.002$.  
For CT-DDPG, the trajectory length $L$ is sampled uniformly from $[2,10]$, and we use exploration noise with standard deviation $\sigma_{\text{explore}}=0.1$.  
For q-learning, for each state $\tilde{x}$ in the minibatch, we sample $n=20$ actions from $\pi(\cdot \mid \tilde{x})$ and compute the penalty term $\big(\frac{1}{n}\sum_{i=1}^n \big[q_\psi(\tilde{x},a_i)-\gamma\log \pi(a_i\mid \tilde{x})\big]\big)^2$.

\paragraph{Baselines.}
We compare against discrete-time algorithms DDPG \citep{lillicrap2015continuous}, SAC \citep{haarnoja2018soft_application}, DDPG equipped with multi-step TD (DDPG-TD(L), \citep{hessel2018rainbow}) 
as well as a continuous-time algorithm with stochastic Gaussian policy: CT-SPG \citep{jia2023q}. 
In particular, for CT-SPG, we adopt two different settings when learning q-function: the original one-step TD taregt ($L=1$) in \citep{jia2023q}, and a multi-step TD extension with $L>1$ as in \Cref{alg:ddpg_seq}.
This provides  fair  comparisons between deterministic and stochastic policies in continuous-time RL.
We also test DAU \citep{tallec2019making}, i.e., CT-DDPG with $L=1$, to see the effects of multi-step TD.
For each algorithm, we report results averaged over at least three independent runs with different random seeds.

\begin{figure}[ht]
    \centering
    \includegraphics[width=0.95\linewidth]{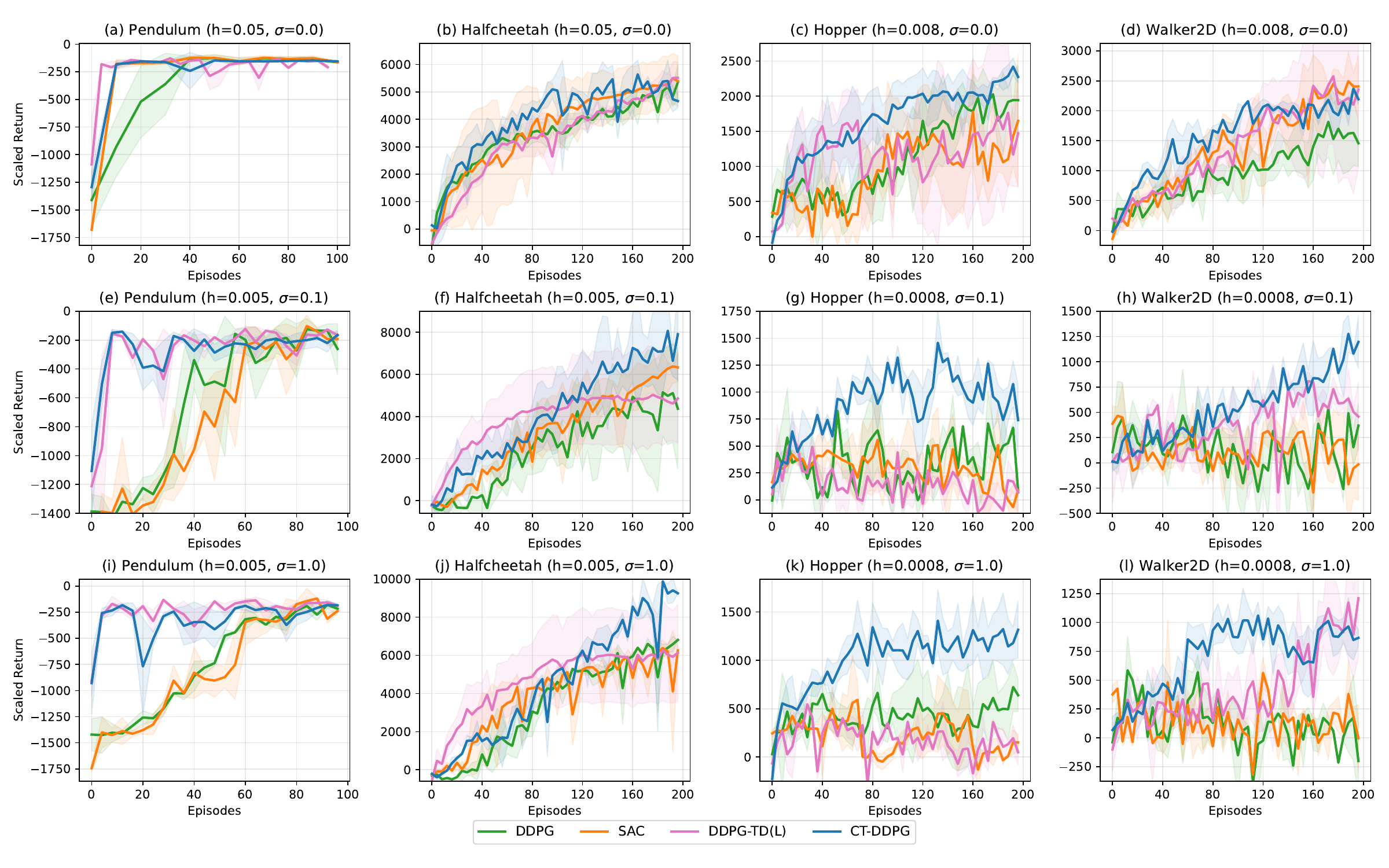}
    \caption{Comparison between CT-DDPG with discrete-time RL algorithms.}
    \label{fig:discrete_comparison}
\end{figure}

\paragraph{Results.} \Cref{fig:continuous_comparison,fig:discrete_comparison} show the average return against training episodes, where the shaded area stands for standard deviation across different runs. 
We observe that for most environments, our CT-DDPG has the best performance among all baselines and the gap becomes larger as discretization step decreases and (or) noise level increases.
Specifically, 
\begin{itemize}[leftmargin=1em]
    \item As demonstrated in \Cref{fig:discrete_comparison}, although discrete-time algorithms, DDPG and SAC, perform reasonably well under the standard Gymnasium settings (top row), they degrade substantially when $h$ decreases and $\sigma$ increases (middle \& bottom rows). This stems from the fact that one-step TD updates provide only myopic information under small $h$ and noisy dynamics, preventing the Q-function from capturing the long-term structure. Incorporating DDPG with multi-step TD can partially alleviate this issue; however, it still exhibits slow convergence due to the persistent bias in the TD objective induced by off-policy sampling.
    \item For continuous-time RL with stochastic policy shown in \Cref{fig:continuous_comparison}, q-learning exhibits slow convergence and training instability, due to the difficulty of enforcing Bellman equation constraints \eqref{eq:consistency_stochastic}. Although q-learning using multi-step TD can to some extent improve upon original q-learning ($L=1$), it still remains unstable across diverse environment settings and underperforms compared to CT-DDPG. This highlights the fundamental limitations of stochastic policy in continuous-time RL.
    \item To further investigate the effects of multi-step TD, we also test DAU (i.e., CT-DDPG with $L=1$) in \Cref{fig:continuous_comparison}. It turns out that in small $h$ and large $\sigma$ regime, DAU converges more slowly. In \Cref{fig:nsr}, we examine the variance to square norm ratio (NSR) of stochastic gradients in the training process and report the moving average. As $h\to 0$, NSR of DAU becomes evidently larger than that of CT-DDPG, consistent with our theories in \Cref{subsec:var_blow_up}. A large NSR leads to the instability when training q-function and consequently impedes the convergence. 
\end{itemize}

\begin{figure}[ht]
    \centering
    \includegraphics[width=0.95\linewidth]{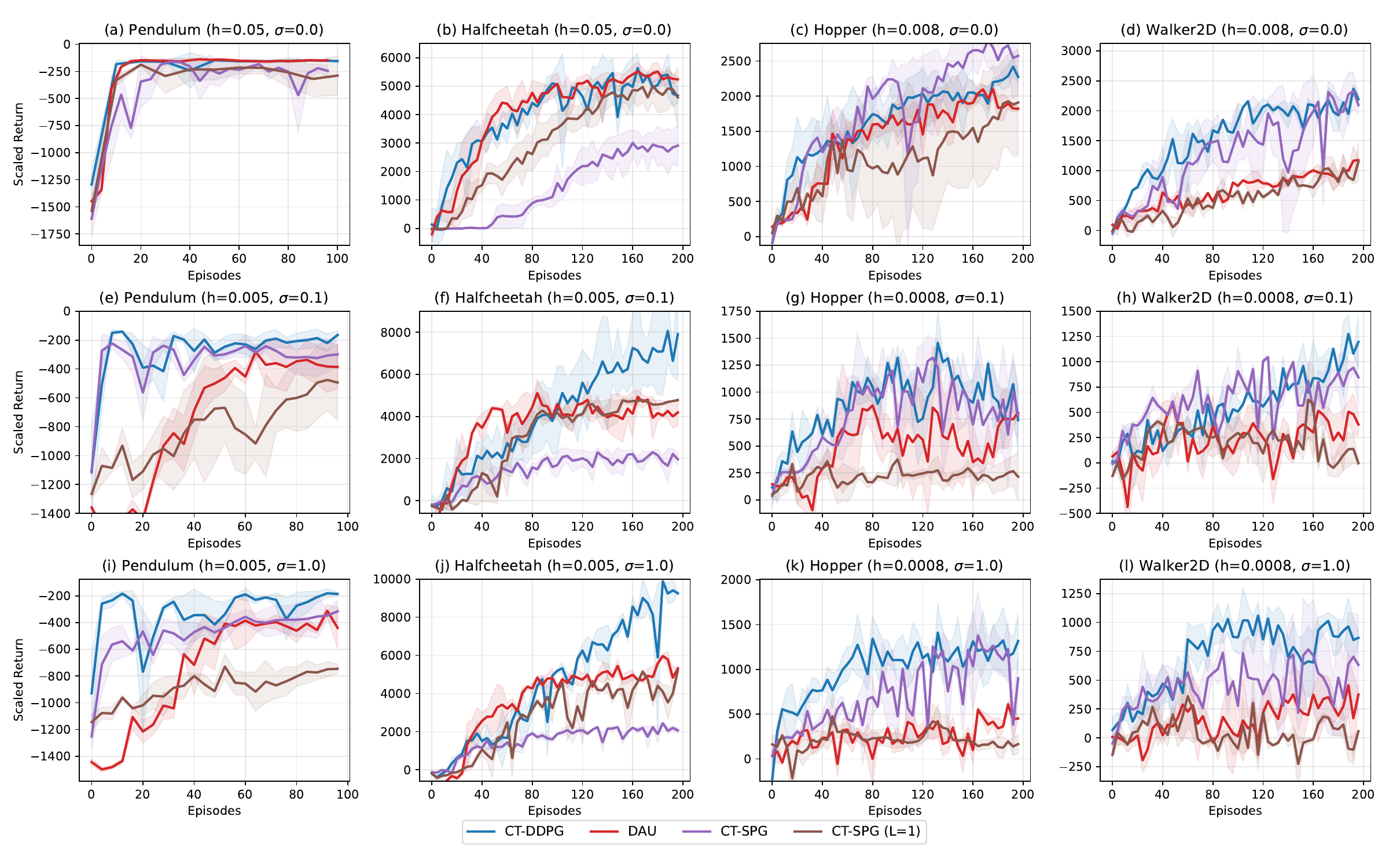}
    \caption{Comparison between continuous-time RL algorithms.}
    \label{fig:continuous_comparison}
\end{figure}
\begin{figure}[ht]
    \centering
    \includegraphics[width=0.95\linewidth]{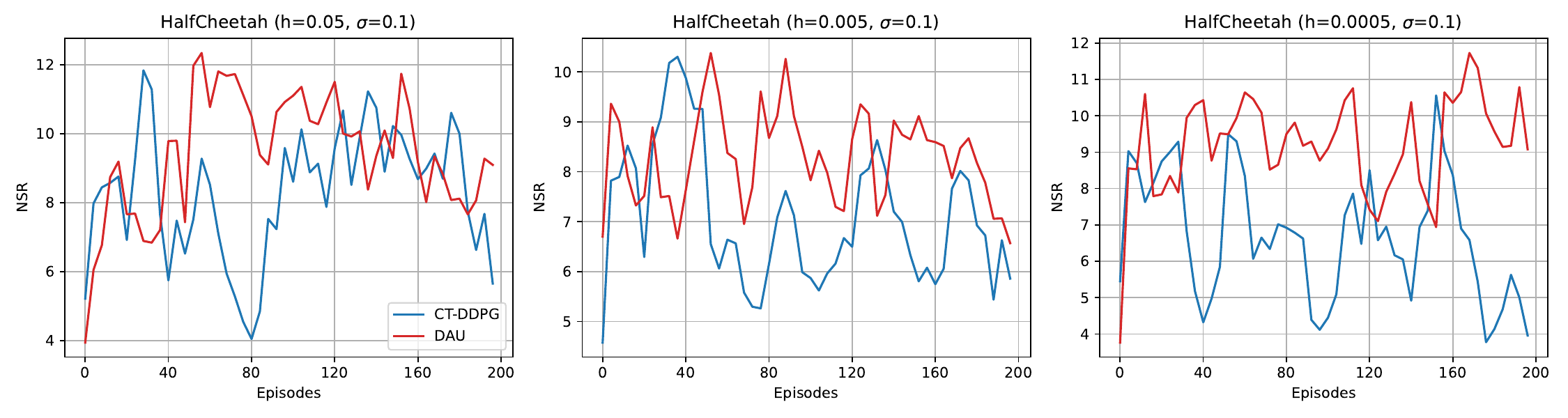}
    \caption{Noise-to-Signal Ratio of stochastic gradient when training value-net $V_\theta$.}
    \label{fig:nsr}
\end{figure}

In summary, CT-DDPG exhibits superior performance in terms of convergence speed and stability across most environment settings, verifying the efficiency and robustness of our method.

\section{Proofs of Main Results}

\subsection{Proofs of \Cref{prop:pg}}

The following performance difference lemma   characterizes   the difference of  value functions  
  with  different policies,
  which will be used in proving the policy gradient formula. 
\begin{lemma}
\label{lemma:performance_difference}
    Suppose \Cref{assum:wp} holds.
  Let 
     $\phi\in \sR^k$ 
     and assume  
     $
    V^\phi   \in C^{1,2}([0, T]  \times \sR^n)$.
  For all $(t,x)\in [0,T]\times \sR^n$
 and $\phi'\in \sR^k$,
 \begin{align}
 \begin{split}
&  V^{\phi'}(t,x )
  -V^{\phi }(t,x)
  \\
& =  
     \mathbb E \bigg[ \int_t^T
   e^{-\beta(s-t)} 
    \bigg(  H[V^\phi]    (s, X^{  \phi' }_s,\mu_{\phi'}(s,X^{  \phi' }_s) )
   -H[V^\phi]  (s, X^{  \phi' }_s, \mu_\phi(s,X^{  \phi' }_s) ) 
   \bigg)
   \dif s \,\bigg\vert\, X^{  \phi' }_t =x\bigg].
 \end{split}
 \end{align}
\end{lemma}

\begin{proof}[Proof of \Cref{prop:pg}]
Recall that 
 $\partial_\phi V^\phi(t,x)  =(
 \partial_{\phi_1} V^\phi(t,x),
 \ldots, 
 \partial_{\phi_k} V^\phi(t,x) 
 )^\top $.
 Hence it suffices to prove 
for all   $   \phi'\in \sR^k$,
 \begin{align*}
 \begin{split}
&  \frac{\dif }{\dif \epsilon }V^{\phi+\epsilon \phi'}(t,x)
  \bigg\vert_{\epsilon=0}
   =  
     \mathbb E \left[ \int_t^T
   e^{-\beta(s-t)} 
    \partial_a A^\phi(s, X^{  \phi }_s,\mu_\phi(s,X^\phi_s))^\top  \partial_\phi \mu_\phi(s,X^\phi_s)
   \dif s\,\bigg\vert\, X^\phi_t=x \right]  \phi'.
 \end{split}
 \end{align*}

 To this end,
     for all $\epsilon \in [-1,1]$,
    let $X^{  \epsilon } $
    be the solution to the following dynamics:
    \begin{equation}
    \label{eq:sde_epsilon}
    \dif X_s=b(s,X_s,\mu_{\phi+\epsilon \phi'}(s, X_s))\dif s+\sigma(s,X_s,\mu_{\phi+\epsilon \phi'}(t,X_t))\dif W_s, 
    \quad X_t=x.
\end{equation}
For all $\epsilon\in[-1,1]$,
by \Cref{lemma:performance_difference}
and the fundamental theorem of calculus, 
   \begin{align}
   \label{eq:derivative_proof1}
 \begin{split}
& \frac{V^{\phi+\epsilon \phi'}(t,x)
  -V^{\phi }(t,x)  }{\epsilon}
  =  
     \mathbb E \bigg[ \int_t^T
   e^{-\beta(s-t)} 
   \left(  \int_0^1
    \mathcal G(s,X^{  \epsilon }_s,\phi+r\epsilon \phi')  \dif r  
   \right)
   \dif s \bigg]   \phi',
 \end{split}
 \end{align}
 where for all $\tilde\phi\in \sR^k$, 
 $$
 \mathcal G(t,x,\tilde\phi)
 \coloneqq   \partial_a H[ V^\phi ](t, x,\mu_{\tilde\phi}(s,x))^\top \partial_\phi \mu_{{\tilde\phi}}(t,x). 
 $$
  To show the  limit of \eqref{eq:derivative_proof1} as $\epsilon \to 0$, 
observe that 
    by \Cref{assum:differentiability} and standard stability analysis of \eqref{eq:sde_epsilon} (see e.g., \cite[Theorem 3.2.4]{zhang2017backward}),
   for all $\epsilon \in [-1,1]$,
\begin{align*}
\begin{split}
     \mathbb{E} \left[ \sup_{t \leq s \leq T} |X_s^{\epsilon}-X_s^{0} |^2 \right]
    &
    \leq C 
     \mathbb{E} \left[
     \left(\int_0^T |b(s, X_s^{0},  \mu_{\phi+\epsilon \phi'}(s, X_s^{0}))
     -b(s, X_s^{0},  \mu_{\phi}(s, X_s^{0})) | \dif s\right)^2
     \right]
     \\
     &\quad + C 
     \mathbb{E} \left[
      \int_0^T |\sigma(s, X_s^{0},  \mu_{\phi+\epsilon \phi'}(s, X_s^{0}))
     -\sigma(s, X_s^{0},  \mu_{\phi}(s, X_s^{0}))|^2 \dif s 
     \right],
\end{split}
\end{align*}
which along with the growth condition in \Cref{assum:wp}
and the   regularity  of $b$, $\sigma$ and $\phi$ in 
\Cref{assum:differentiability}, and the dominated convergence theorem shows that 
\begin{equation}
\label{eq:state_convergence}
 \lim_{\epsilon\to 0}\mathbb{E} \left[ \sup_{t \leq s \leq T} |X_s^{\epsilon}-X_s^{0} |^2 \right]=0. 
  \end{equation}
Moreover,
there exists $C\ge 0$ such that for all $\epsilon \in [-1,1]$, and
$A\in \mathcal F\otimes \mathcal B([0,T])\otimes \mathcal B([0,1])$,
\begin{align*}
  & \mathbb E \bigg[ \int_t^T   \int_0^1  
   \mathbf{1}_A e^{-\beta(s-t)} 
  \left|  \mathcal G(s,X^{  \epsilon }_s,\phi+r\epsilon \phi')  \right|  \dif r  
   \dif s \bigg]  
   \\
   &
   \le \mathbb E \bigg[ \int_t^T   \int_0^1 
 \mathbf{1}_A \dif r  
   \dif s \bigg]^{\frac{1}{2}  }
   \mathbb E \bigg[ \int_t^T   \int_0^1  
    e^{-2\beta(s-t)} 
  \left|   \mathcal G(s,X^{  \epsilon }_s,\phi+r\epsilon \phi')  \right|^2  \dif r  
   \dif s \bigg]^{\frac{1}{2}  }
   \\
   &\le \mathbb E \bigg[ \int_t^T   \int_0^1 
 \mathbf{1}_A \dif r     \dif s \bigg]^{\frac{1}{2}  } 
   C \left(1+
   \mathbb{E} \left[ \sup_{ t \leq s \leq T} |X_s^{\epsilon} |^4 \right]
   \right)^{\frac{1}{2}  },
\end{align*}
where the last inequality used 
 the growth conditions on    the derivatives of the coefficients $b,\sigma,r$ and $\mu$, and of the value function   $V^\phi$.
Using the moment condition 
$\sup_{\epsilon\in [-1,1]}\mathbb{E} \left[ \sup_{ t \leq s \leq T} |X_s^{\epsilon} |^4 \right]<\infty$,
the   random variables  
 $\{(\omega, s,r)\mapsto e^{-\beta(s-t)} 
   \mathcal G(s,X^{  \epsilon }_s,\phi+r\epsilon \phi') \mid \epsilon \in [-1,1]\}$  are uniformly integrable.  
Hence using Vitali's convergence theorem and passing  $\epsilon \to 0$ 
in \eqref{eq:derivative_proof1} yield the desired identity. 
\end{proof}

\subsection{Proof of \Cref{thm:martingale_char}}

\begin{proof}[Proof of \Cref{thm:martingale_char}]

For all $(t,x)\in[0,T]\times\R^n$ and $a\in \mathcal O_{\mu_\phi(t,x)}$,
   applying It\^o's formula to $u\mapsto e^{-\beta (u-t)}\hat{V}(s,X_u^{t,x, a })$ yields
   for all $0\leq t<s\leq T$,
    \begin{equation}
        \begin{aligned}
            e^{-\beta (s-t)}\hat{V}(s,X_s^{t,x,  a })-  \hat{V}(t,X_t^{t,x,  a })& = \int_t^s e^{-\beta  (u-t)}\mathcal L[\hat V] 
            (u, X_u^{t,x, a },\alpha_u )
            \dif  u 
           \\
           &
            + \int_t^s e^{-\beta (u-t)}\partial_x\hat{V}(u, X_u^{t,x, a }  )^\top  \sigma(u, X_u^{t,x, a },\alpha_u )\dif  W_u.
        \end{aligned}
   \end{equation}
This along with the martingale condition \eqref{eq:martingale}
implies 
 \begin{equation*}
      \left( \int_t^s  
       e^{-\beta  (u-t)}
       \left(\mathcal L[\hat V]+r -\hat q\right)
            (u, X_u^{t,x, a },\alpha_u ) \dif u
            \right)_{s\in [t,T]}
   \end{equation*}
   is a  martingale, which has continuous paths and finite variation.  Hence almost surely
   \begin{equation}
    \label{eq:H-q_constant_mv}
   \int_t^s  
       e^{-\beta  (u-t)}
       \left(\mathcal L[\hat V]+r -\hat q\right)
            (u, X_u^{t,x, a },  \alpha_u ) \dif u = 0, \quad \forall s\in [t,T].
   \end{equation}    

We claim $(\mathcal L[\hat V]+r-\hat q )(t,x,a)= 0$
for all $(t,x)\in[0,T]\times\R^n$ and $a\in \mathcal O_{\mu_\phi(t,x )}$.
To see it, define 
$f(t,x,a)\coloneqq  (L[\hat V]+r -\hat q)
            (t, x, a ) $
       for all $(t,x, a)\in[0,T]\times\R^n\times \mathcal{A} $.
By assumptions, $f\in C([0,T]\times \sR^n\times \mathcal A ) $.
 Suppose there exists 
 $(\bar t,\bar x)\in [0,T]\times \sR^n$
 and $\bar a\in \mathcal O_{\mu_\phi(t,x )}$
 such that $f(\bar t,\bar x, \bar a)\not = 0$.  
 Due to the continuity of $f$, 
 we can assume without loss of generality that 
   $f(\bar t,\bar x, \bar a)>0$
   and $\bar t\in [0,T)$. The continuity of $f$ implies that there exist constants $\epsilon, \delta>0$ such that $f(t,x,  a)\ge \epsilon>0$ for all $(t,x, a)\in [0,T]\times \sR^n\times \mathcal{A}$ with $\max\{|t-\bar t|, |x-\bar x|,   |a-\bar a|\}\le \delta$.
   Now  consider the process  
 $X^{\bar t,\bar x, \bar a }$ defined by \eqref{eq:martingale_state_mv}, and 
define the stopping time 
   \begin{align*}
       \tau &\coloneqq \inf\left\{t\in [  \bar t,T]\mid  \max\{|t-\bar t|, |X^{\bar t,\bar x, \bar a }_t-\bar x|,
       |\alpha_t-\bar a|
       \}>\delta\right\}.
   \end{align*}
Note that 
$\tau >\bar t$ almost surely,
due to the sample path continuity of 
  $t\mapsto X^{\bar t,\bar x, \bar a }_t$
  and the  condition $\lim_{s\searrow t}   \alpha_s=\bar a$. 
  This along with \eqref{eq:H-q_constant_mv}  implies that 
there exists a measure zero set $\mathcal N$ such that 
for all $\omega\in \Omega\setminus \mathcal N$, 
   $\tau(\omega)> \bar t $, 
   and 
$$
   \int_{\bar t}^{\tau(\omega)}  
       e^{-\beta  (u-\bar t)}
       f        \left(u, X_u^{\bar t,\bar x, \bar a }(\omega),  \alpha_u (\omega) \right) \dif u = 0.
   $$   
However, by the definition of $\tau$, for all $t\in (\bar t,\tau(\omega))$,
$ \max\{|t-\bar t|, |X^{\bar t,\bar x, \bar a }_t-\bar x|,
       |\alpha_t-\bar a|
       \}\le \delta$, 
       which along with the choice of $\delta$ implies 
   $f  (t, X_t^{\bar t,\bar x,\bar a }(\omega),  \alpha_t (\omega) ) \ge \epsilon>0$ and hence 
$$
   \int_{\bar t}^{\tau(\omega)}  
       e^{-\beta  (u-\bar t)}
       h        \left(u, X_u^{\bar t,\bar x, \bar a }(\omega),   \alpha_u (\omega) \right) \dif u > 0.
   $$  
  This yields a contradiction, and proves $(\mathcal L[\hat V]+r-\hat q)(t,x,a)=0$  
  for all $(t,x,\mu)\in[0,T]\times\R^n$ and $a\in \mathcal O_{\mu_{\phi}(t,x)}$. 

  Now by \eqref{eq:hjb_condition}, 
  for all $(t,x )\in[0,T]\times\R^n$,
  \begin{align*}
  (\mathcal L[\hat V] +r)(t,x,  \mu_{\phi}(t,x))
  =0,
  \quad \hat V(T,x)=g(x).
  \end{align*}
Since $V^\phi  \in C^{1,2}([0,T]\times \sR^n)$  satisfies the same PDE, 
the uniqueness of Feynman-Kac formula \citep{beck2021nonlinear} shows that   $\hat{V}(t,x)=V^\phi(t,x)$
for all $(t,x)$.
This   subsequently implies 
 $(\mathcal L [V^\phi ]+r-\hat q)(t,x, a)=0$ 
  for all $(t,x )\in[0,T]\times\R^n$ and $a\in \mathcal O_{\mu_\phi (t,x)}$. 
\end{proof}

\subsection{Proof of \Cref{thm:spg_limit}}
The following lemma shows that 
the convergence in  \eqref{eq:conv_weak}
in fact holds uniformly with respect to $u$ over a compact set.
\begin{lemma}
\label{lemma:uniform_conv}
Suppose  Assumption \ref{assum:stochastic_policy}  holds. 
For any compact subset $\mathcal K\subset \sR^d$, 
$
\lim_{\tau \searrow   0}\sup_{u\in \mathcal K}\dif_{\rm BL}          \left( \chi_\tau(a-u)\dif a, \delta_u \right)=0, 
$ 
where $\dif_{\rm BL}:\mathcal P(\sR^d)\to [0,\infty)$ is the bounded Lipschitz metric defined  by
\begin{equation}
\label{eq:BL_metric}
\dif_{\rm BL}(\mu,\nu)\coloneqq
\sup_{f\in \operatorname{Lip}_{b,1}(\sR^d)}    
       \left(\int_{\sR^d}f(a)\mu(\dif a)-\int_{\sR^d}f(a)\nu(\dif a)\right).
\end{equation}
\end{lemma}

 The following lemma shows that integrating with respect to  
 $\{\pi_{\phi,\tau}\}_{\tau>0}$
 preserves the H\"older continuity. 
\begin{lemma}
\label{lemma:holder_integral}
 Suppose \Cref{assum:regularity_limit,assum:stochastic_policy} hold.
 Let 
 $f\in C_b([0,T]\times \sR^n\times \sR^d) $ and $\phi\in \sR^k$. For each $(t,x)\in [0,T]\times \sR^n$,
 define $h_{\phi,\tau}(t,x)=\int_{\mathcal A}f(t,x,a)\pi_{\phi,\tau}(\dif a|t,x)$
  for all $\tau>0$  and
  define  $h_{\phi,0}(t,x)=f(t,x,\mu_\phi(t,x))$.
 Then
\begin{enumerate}[(1)]
    \item 
There exists a constant $C\ge 0$, depending only on $\|\mu_\phi\|_{\alpha,1}$ in \eqref{eq:policy_holder} and $C_\chi$ in \eqref{eq:lips_weak}, such that 
 for all $\tau>0$  and $\ell,\eta\in (0,1]$,
\begin{equation}
\label{eq:holder_integral}
 \sup_{t\in [0,T]}\|h_{\phi,\tau}(t,\cdot)\|_{\eta \wedge  1}
 \le C \sup_{t\in [0,T]}\|f(t,\cdot)\|_{\eta, 1}, 
 \quad
  \|h_{\phi,\tau}\|_{
 \alpha\wedge \ell, 
 \eta \wedge  1}
 \le C \|f \|_{\ell,\eta, 1},
\end{equation}
 where $\alpha \in (0,1) $
 is the H\"older component in \eqref{eq:policy_holder},
 and $\ell\wedge \eta =\min(\ell,\eta)$.
 \item  
 If $\sup_{(t,x)\in [0,T]\times \sR^n}\|f(t,x,\cdot)\|_{1}<\infty$,
 then 
 $\lim_{\tau\to 0}\|h_{\phi,\tau}-h_{\phi,0} \|_0=0$.
  \end{enumerate} 
\end{lemma}

The following lemma shows that 
the  solution to  \eqref{eq:aggregated_sde}
converges to that of \eqref{eq:state_mu} as $\tau\to 0$.

\begin{lemma}
\label{lemma:X_tau_conv}
   Suppose \Cref{assum:regularity_limit,assum:stochastic_policy} hold, and let  $\phi\in \sR^k$.
   For all  $\tau>0$, 
   the dynamics \eqref{eq:aggregated_sde} has a unique strong solution $\tilde X^{\phi,\tau}$.
   Moreover, $\lim_{\tau\to 0}\mathbb E\left[\sup_{t\in [0,T]}|\tilde X^{\phi,\tau}_t-X^\phi_t|^2 \right]=0$.
    
\end{lemma}

The following lemma establishes the  H\"older regularity  
and the convergence 
of the value function  $\tilde V^{\phi,\tau}$ and its derivatives, as  $\tau\to 0$. 
 
\begin{lemma}
\label{lemma:V_tau_holder_conv}
    Suppose \Cref{assum:regularity_limit,assum:stochastic_policy} hold, and let  $\phi\in \sR^k$.
    Then  
for each $\tau>0$, $\tilde V^{\phi,\tau}$ is the classical solution to    the following linear PDE: 
   for all $(t,x)\in [0,T]\times \sR^n$, 
\begin{align} 
\label{eq:pde_v_tau}
\begin{split} 
 & \int_{\mathcal A} \left(\mathcal L [ \tilde V^{\phi,\tau}](t,x, a) +r(t,x,a)\right)\pi_{\phi,\tau}(\dif a|t,x) =0,
\quad 
 \tilde  V^{\phi,\tau}(T,x) = g(x),
  \end{split}
\end{align}
where $\mathcal L$ is the generator defined in \eqref{eq:generator_L}. 
Moreover,
$\sup_{\tau>0}\|\tilde V^{\phi,\tau}\|_{1+\alpha/2,2+\alpha}<\infty$,
and 
the functions
$\{\tilde V^{\phi,\tau}\}_{\tau>0}$, 
$\{\partial_x \tilde V^{\phi,\tau}\}_{\tau>0}$,
and 
$\{\partial^2_{xx} \tilde V^{\phi,\tau}\}_{\tau>0}$
converge to 
$ V^{\phi}$, 
$ \partial_x V^{\phi}$,
and 
$ \partial^2_{xx} V^{\phi}$, respectively, 
uniformly on compact subsets of $[0,T]\times \sR^n$.
\end{lemma}

 The following proposition presents a  version  of SPG formula with respect to  the policy $\pi_{\phi,\tau}$.
 \begin{lemma}
 \label{lemma:spg}
      Suppose \Cref{assum:regularity_limit,assum:stochastic_policy} hold.
      For all   $(t,x)\in [0,T]\times \sR^n$, $ \phi \in \sR^k$ and $\tau>0$,
 \begin{align*}
 \begin{split}
&  \partial_\phi \tilde V^{\phi,\tau}(t,x)
\\
  &=   
   \mathbb E \left[ \int_t^T
   e^{-\beta(s-t)} 
   \partial_\phi \mu_\phi (s, \tilde X^{  \phi,\tau }_s)^\top
   \int_{\mathcal A} \partial_ a A^{\phi,\tau}(s, \tilde X^{  \phi,\tau }_s, a)
    \pi_{\phi,\tau}(\dif a| s, \tilde X^{\phi,\tau  }_s)  
   \dif s\,\bigg\vert\, \tilde X^{\phi,\tau}_t=x \right],
 \end{split}
 \end{align*}
 where 
 $A^{\phi,\tau}(t,x,a)\coloneqq  \mathcal L[ \tilde V^{\phi,\tau}](t,x, a)  +  r(t, x,   a)$. 
 \end{lemma}

 \begin{proof}[Proof of \Cref{thm:spg_limit}]
    We assume without loss of generality that $t=0$. The proof for a general $t\in (0,T]$ follows from similar arguments and   the Markov property of $\tilde X^{\phi,\tau}$ and  $ X^{\phi}$.
Define for each $\tau>0$ and $(t,x)\in [0,T]\times \sR^n$,
\begin{align*}
    F_\tau (t,x)&\coloneqq 
    e^{-\beta t} 
   \partial_\phi \mu_\phi (t, x)^\top
   \int_{\mathcal A} \partial_ a A^{\phi,\tau}(t, x, a)
    \pi_{\phi,\tau}(\dif a| t, x),
    \\
  F_0 (t,x)&\coloneqq     e^{-\beta t} 
   \partial_\phi \mu_\phi (t, x)^\top
  \partial_ a A^{\phi}(t, x, \mu_\phi(t,x)).
\end{align*}
Then for all $\tau>0$, 
\begin{align}
   & \partial_\phi \tilde V^{\phi,\tau}(0,x) - \partial_\phi  V^{\phi}(0,x)
   \nonumber
   \\
    &=\mathbb E\left[\int_0^T F_\tau(t,\tilde X^{\phi,\tau}_t)\dif t\right]-\mathbb  E\left[\int_0^T F_0(t,  X^{\phi}_t )\dif t\right]
   \nonumber
    \\
    &=\mathbb E\left[\int_0^T (F_\tau(t,\tilde X^{\phi,\tau}_t)
    -F_\tau(t,  X^{\phi}_t))\dif t\right]
   +\mathbb E\left[\int_0^T (F_\tau(t,  X^{\phi}_t) - F_0(t,  X^{\phi}_t ))\dif t\right],
\label{eq:spg-dpg}
\end{align}
where to   simplify the notation,  we have omitted  
 the explicit dependence of $\tilde X^{\phi,\tau}$ and  $ X^{\phi}$
on the initial state $x$.
We shall prove that both terms in  \eqref{eq:spg-dpg} vanish as $\tau \to 0$.

Note that 
by  \Cref{assum:regularity_limit} and \Cref{lemma:holder_integral,lemma:V_tau_holder_conv}, 
  $\sup_{t\in [0,T],\tau\ge 0}\|F_\tau(t,\cdot)\|_{\alpha}<\infty$.
  Hence there exists a constant $C\ge 0$ such that for all $\tau>0$,
  \begin{align*}
      \left|\mathbb E\left[\int_0^T (F_\tau(t,\tilde X^{\phi,\tau}_t)
    -F_\tau(t,  X^{\phi}_t))\dif t\right]\right|
    &\le  \mathbb E\left[\int_0^T |F_\tau(t,\tilde X^{\phi,\tau}_t)
    -F_\tau(t,  X^{\phi}_t)|\dif t\right]
    \\
    &\le C\mathbb E\left[\int_0^T | \tilde X^{\phi,\tau}_t 
    - X^{\phi}_t|^\alpha\dif t\right],
  \end{align*}
  which tends to zero as $\tau\to 0$ due to \Cref{lemma:X_tau_conv}.
  For the second term in \eqref{eq:spg-dpg},
using the definitions of $A^{\phi,\tau}$
and $A^{\phi}$, the boundedness and Lipschitz continuity of  
$a\mapsto (\partial_a b,\partial_a \Sigma,\partial_a r)$,
and \Cref{lemma:holder_integral,lemma:V_tau_holder_conv},
$\{F_\tau\}_{\tau >0}$
converges to 
$F_0$ as $\tau\to 0$, uniformly on compact subsets of $[0,T]\times \sR^n$. 
Since $\{F_\tau\}_{\tau \ge 0}$
are uniformly bounded, by the dominated convergence theorem,
$$
\lim_{\tau\to 0}\mathbb E\left[\int_0^T (F_\tau(t,  X^{\phi}_t) - F_0(t,  X^{\phi}_t ))\dif t\right]=0.$$
This finishes the proof.
 \end{proof}

\subsection{Proof of \Cref{prop:var_blow_up}}

\begin{proof}
    By It\^o's formula, 
    \begin{equation}
        \begin{aligned}
            &e^{-\beta h}V_\theta(\tilde{x}_{t+h})-V_\theta(\tilde{x}_t)\\
            &=\underbrace{\int_{t}^{t+h}e^{-\beta(s-t)}\left[\partial_t V_\theta(\tilde{x}_s)+\partial_xV_\theta(\tilde{x}_s)^\top b(\tilde{x}_s,a_s)+\frac{1}{2}\Tr(\partial_{xx}^2V_\theta(\tilde{x}_s)\sigma\sigma^\top(\tilde{x}_s,a_s))-\beta V_\theta(\tilde{x}_s)\right]\dif s}_{\cirone}\\
            &\qquad +\underbrace{\int_{t}^{t+h}e^{-\beta(s-t)}\partial_x V_\theta(\tilde{x}_s)^\top\sigma(\tilde{x}_s,a_s)\dif W_s}_{\cirtwo}.
        \end{aligned}
    \end{equation}
    Note that the last term is a martingale and thus vanishes after taking expectation. Therefore the semi-gradient can be rewritten as 
    \begin{equation}
        \begin{aligned}
            G_{\theta,h}
            =\E\left[\partial_\theta V_\theta(\tilde{x}_t)\left(-\cirone\cdot \frac{1}{h}+(A_\psi(\tilde{x}_t,a_t)-r_t)\right)\right].
        \end{aligned}
    \end{equation}
    When the discretization step $h$ goes to zero, the integral $\cirone$ admits a first-order expansion, hence
    \begin{equation}
        \lim_{h\to 0}G_{\theta,h}=\E\left[\partial_\theta V_\theta(\tilde{x}_t)\left(A_\psi(\tilde{x}_t,a_t)-\partial_t V_\theta(\tilde{x}_t)-H(\tilde{x}_t,a_t,\partial_x V_\theta(\tilde{x}_t),\partial_{xx}^2V_\theta(\tilde{x}_t))+\beta V_\theta(\tilde{x}_t)\right)\right].
    \end{equation}
    Similarly we have
    \begin{equation}
        \lim_{h\to 0}G_{\psi,h}=\E\left[\partial_\psi A_\psi(\tilde{x}_t,a_t)\left(A_\psi(\tilde{x}_t,a_t)-\partial_t V_\theta(\tilde{x}_t)-H(\tilde{x}_t,a_t,\partial_x V_\theta(\tilde{x}_t),\partial_{xx}^2V_\theta(\tilde{x}_t))+\beta V_\theta(\tilde{x}_t)\right)\right].
    \end{equation}
    On the other hand, consider the conditional variance of stochastic gradient:
    \begin{equation}\label{eq:grad_cond_var}
        \begin{aligned}
            \Var(g_{\theta,h}\mid\mathcal{F}_t)=\frac{1}{h^2}\partial_\theta V_\theta(\tilde{x}_t)\partial_\theta V_\theta(\tilde{x}_t)^\top\Var(e^{-\beta h}V_\theta(\tilde{x}_{t+h})-V_\theta(\tilde{x}_{t})\mid\mathcal{F}_t).
        \end{aligned}
    \end{equation}
    Note that
    \begin{equation}
        \begin{aligned}
            \E[(e^{-\beta h}V_\theta(\tilde{x}_{t+h})-V_\theta(\tilde{x}_{t}))^2\mid\mathcal{F}_t]
            &= \E[\cirone^2+2\cdot\cirone\cdot\cirtwo+\cirtwo^2\mid\mathcal{F}_t],
        \end{aligned}
    \end{equation}
    and $\E[e^{-\beta h}V_\theta(\tilde{x}_{t+h})-V_\theta(\tilde{x}_{t})\mid\mathcal{F}_t]=\E[\cirone\mid\mathcal{F}_t]$. This yields
    \begin{equation}
        \Var(e^{-\beta h}V_\theta(\tilde{x}_{t+h})-V_\theta(\tilde{x}_{t})\mid\mathcal{F}_t)=\Var(\cirone\mid\mathcal{F}_t)+\E\left[\cirtwo^2+2\cdot\cirone\cdot\cirtwo\mid\mathcal{F}_t\right]\geq\E\left[\cirtwo^2+2\cdot\cirone\cdot\cirtwo\mid\mathcal{F}_t\right]
    \end{equation}
    According to It\^o isometry,
    \begin{equation}
        \E[\cirone^2\mid\mathcal{F}_t]=\mathcal{O}(h^2),
    \end{equation}
    \begin{equation}
        \E[\cirtwo^2\mid\mathcal{F}_t]=\E\left[\int_{t}^{t+h}e^{-2\beta(s-t)}\|\partial_x V_\theta(\tilde{x}_s)^\top\sigma(\tilde{x}_s,a_s)\|^2\dif s\bigg| \tilde{x}_t\right]=\mathcal{O}(h),
    \end{equation}
    and the cross term can be controlled by Cauchy-Schwarz:
    \begin{equation}
        \E[|\cirone\cdot\cirtwo|\mid\tilde{x}_{t}]\leq (\E[\cirone^2\mid\tilde{x}_{t}])^{\frac{1}{2}}\cdot(\E[\cirtwo^2\mid\tilde{x}_{t}])^{\frac{1}{2}}=\mathcal{O}(h^{\frac{3}{2}})
    \end{equation}
    These estimates show that, as $h\to 0$, the leading contribution to the variance comes from the stochastic integral term $\cirtwo$.
    As a result, by combining Fatou's Lemma and \eqref{eq:grad_cond_var}, we conclude that
    \begin{equation}
        \begin{aligned}
            \lim_{h\to 0}h\cdot\Var(g_{\theta,h})
            &\geq \lim_{h\to 0}h\cdot\E[\Var(g_{\theta,h}\mid\mathcal{F}_t)] \\
            &\geq\E\left[\lim_{h\to 0}[h\cdot\Var(g_{\theta,h}\mid\mathcal{F}_t)]\right] \\
            &\geq \E\left[\partial_\theta V_\theta(\tilde{x}_t)\partial_\theta V_\theta(\tilde{x}_t)^\top\|\partial_x V_\theta(\tilde{x}_t)^\top\sigma(\tilde{x}_t,a_t)\|^2\right].
        \end{aligned}
    \end{equation}
\end{proof}

\subsection{Proof of \Cref{prop:var_normal}}

\begin{proof}
    We begin by recalling that, for any horizon $Lh$, It\^o's formula yields, 
    \begin{equation}
        \begin{aligned}
            &e^{-\beta Lh}V_\theta(\tilde{x}_{t+Lh})-V_\theta(\tilde{x}_t)\\
            &=\underbrace{\int_{t}^{t+Lh}e^{-\beta(s-t)}\left[\partial_t V_\theta(\tilde{x}_s)+\partial_xV_\theta(\tilde{x}_s)^\top b(\tilde{x}_s,a_s)+\frac{1}{2}\Tr(\partial_{xx}^2V_\theta(\tilde{x}_s)\sigma\sigma^\top(\tilde{x}_s,a_s))-\beta V_\theta(\tilde{x}_s)\right]\dif s}_{\cirthree}\\
            &\qquad +\underbrace{\int_{t}^{t+Lh}e^{-\beta(s-t)}\partial_x V_\theta(\tilde{x}_s)^\top\sigma(\tilde{x}_s,a_s)\dif W_s}_{\cirfour}.
        \end{aligned}
    \end{equation}
    Now consider the case where $Lh \equiv \delta > 0$ is fixed while $h\to 0$. 
    In this regime, the estimator $G_{\theta,h,\delta/h}$ can be expressed as
    \begin{equation}
        \begin{aligned}
        \lim_{h\to 0}G_{\theta,h,\frac{\delta}{h}}
        &=\E\left[\partial_\theta V_\theta(\tilde{x}_t)\left(V_\theta(\tilde{x}_t)-\int_t^{t+\delta}e^{-\beta (s-t)}[r_s-q_\psi(\tilde{x}_s,a_s)]\dif s-e^{-\beta \delta}V_\theta(\tilde{x}_{t+\delta})\right)\right]=\Theta(1)\\
        &=\E\left[\partial_\theta V_\theta(\tilde{x}_t)\left(\int_{t}^{t+\delta}\left[q_\psi(\tilde{x}_s,a_s)-\partial_t V_\theta(\tilde{x}_s)-H(\tilde{x}_s,a_s,\partial_x V_\theta(\tilde{x}_s),\partial_{xx}^2V_\theta(\tilde{x}_s))+\beta V_\theta(\tilde{x}_s)\right]\dif s\right)\right]\\
        &=\Theta(1).
        \end{aligned}
    \end{equation}
    The integral is taken over a fixed interval of length $\delta$, and thus this expression is bounded and will not vanish.
    
    We next turn to the variance. Expanding the definition of $g_{\theta,h,L}$ and using Jensen’s inequality, we obtain
    \begin{equation}
        \begin{aligned}
        &\Var(g_{\theta,h,L})
        \\
        &\leq 2\E\left[\partial_\theta V_\theta(\tilde{x}_t)\partial_\theta V_\theta(\tilde{x}_t)^\top\left((e^{-\beta Lh}V_\theta(\tilde{x}_{t+Lh})-V_\theta(\tilde{x}_t))^2+(\sum_{l=0}^{L-1}e^{-\beta lh}[r_{t+lh}-q_\psi(\tilde{x}_{t+lh},a_{t+lh})]h)^2\right)\right]\\
        &=\mathcal{O}(1).
        \end{aligned}
    \end{equation}
    This is because all terms are bounded.
\end{proof}

\section{Proofs of Technical Lemmas}

\subsection{Proof of \Cref{lemma:performance_difference}}

\begin{proof}
Observe that under \Cref{assum:wp}, for each $\phi\in \sR^k$,
and  $(t,x) \in [0,T]\times \sR^n$,
\begin{align}
V^\phi(t,x ) \coloneqq \mathbb E \left[
\int_t^T 
e^{-\beta (s-t)}
r(s,X_s^{t,x,\phi},\mu_\phi(s,X_s^{t,x,\phi}))\dif t+e^{-\beta (T-s)}g(X_T^{t,x,\phi})
\right],
\end{align}
where $(X_s^{t,x,\phi})_{s\ge t}$ satisfies for all $s\in [t,T]$, 
\begin{equation}
    \dif X_s=b(s,X_s,\mu_\phi(s, X_s))\dif s+\sigma(s,X_s,\mu_\phi(s,X_s))\dif W_s, 
    \quad X_t=x. 
\end{equation}
Fix $\phi'\in \sR^d$. 
Denote by $X^{\phi}= X^{t,x,\phi}$ and $X^{\phi'}= X^{t,x,\phi'}$ for simplicity.
Then 
    \begin{align}
    \label{eq:performance_difference1}
    \begin{split}
       & V^{\phi'} (t, x )-V^{\phi}(t,x) 
       \\
        & =
         \mathbb E \left[
\int_t^T e^{-\beta(s-t)} r (s, X^{  \phi' }_s, \mu_{\phi'}(s,X_s^{\phi'}) ) \dif s  \right]+ 
   e^{-\beta(T-t)}   \mathbb E \left[g  (X^{  \phi' }_T )\right]
   -V^{\phi}(t,x) 
\\  
&=  \mathbb E \left[
\int_t^T e^{-\beta(s-t)} r (s, X^{  \phi' }_s, \mu_{\phi'}(s,X_s^{\phi'}) ) \dif s  \right]
+
   \mathbb E \left[e^{-\beta(T-t)}  V^{\phi} (T, X^{\phi'  }_T  )\right]
-V^{\phi}(t,X^{x,\phi'  }_t),
\end{split}
    \end{align}
  where    the last identity used the fact that $V^\phi(T,x)=g(x) $ and
  $ {X^{\phi' }_t} =x$.
As    $V^\phi \in C^{1,2}([0,T]\times \sR^n)$,
  applying   It\^{o}'s formula
to 
$s\mapsto e^{-\beta (s-t)}  V^\phi(s, X^{\phi' }_s)$ yields
\begin{align*}
   & \mathbb E \left[e^{-\beta(T-t)}  V^{\phi} (T, X^{\phi'  }_T  )\right]
-V^{\phi}(t,X^{\phi'  }_t)
\\
 &   = 
     \mathbb E \left[ \int_t^T
   e^{-\beta(s-t)} \mathcal L [V^\phi] (s, X^{\phi'}_s, \mu_{\phi'}(s, X^{\phi'}_s))\dif s \right]
 \\
 &  
 = 
     \mathbb E \left[ \int_t^T
   e^{-\beta(s-t)} 
   \left(
\left(\mathcal L [V^\phi] (s, y, \mu_{\phi'}(s, y)) 
   -\mathcal L [V^\phi] (s, y, \mu_{\phi}(s, y))
   \right)
   \bigg\vert_{y=X^{\phi'}_s}
   +\mathcal L [V^\phi] (s, X^{\phi'}_s, \mu_{\phi}(s, X^{\phi'}_s))
   \right) \dif s \right]
   \\
   &  
    = 
     \mathbb E  \left[ \int_t^T
   e^{-\beta(s-t)} 
     \left(
\left(\mathcal L [V^\phi] (s, y, \mu_{\phi'}(s, y)) 
   -\mathcal L [V^\phi] (s, y, \mu_{\phi}(s, y))
   \right)
   \bigg\vert_{y=X^{\phi'}_s}
   - r(s,X^{\phi'}_s,\mu_\phi(s,X^{\phi'}_s))
   \right)
   \dif s \right], 
\end{align*}
where the last identity used the PDE \eqref{eq:pde_v}. 
This along with 
\eqref{eq:performance_difference1}
proves the desired result. 
\end{proof}

\subsection{Proof of \Cref{lemma:uniform_conv}}
 \begin{proof}
 Let $\mathcal P(\sR^d)$
 be the metric space of probability measures on $\sR^d$,  equipped 
 with the metric $\dif_{\rm BL}$.
 Consider the class of continuous functions 
  $\{f_\tau\}_{\tau\ge 0}\subset C(\sR^d,\mathcal P(\sR^d))$,
 where for each $u\in\sR^d$,
 $f_\tau (u)\coloneqq \chi_\tau(a-u)\dif a$  for   $\tau>0$,   and $f_0(u)\coloneqq \delta_u$.
 Note that
 for each $u\in \sR^d$, 
  the fact that   $\lim_{\tau \to 0}\dif_{\rm BL} ( f_\tau (u), f_0(u))=0$ in  \eqref{eq:conv_weak} implies that 
 $\{f_\tau (u)\}_{\tau\ge 0}$
 has a compact closure in $\mathcal P(\sR^d)$. Moreover, Condition \eqref{eq:conv_weak} implies that 
 $\dif_{\rm BL} ( f_\tau (u),  f_\tau (v))\le |u-v|$ for all $\tau>0$,
 and hence $\{f_\tau\}_{\tau\ge 0}$ is equicontinuous under $\dif_{\rm BL}$. 
 Thus by Ascoli's theorem \cite[Theorem 47.1]{munkres2000topology}, 
 $\{f_\tau\}_{\tau\ge 0}$ is contained in a compact subset of $C(\sR^d,\mathcal P(\sR^d))$ with respect  to  the topology of compact convergence. This along with the pointwise convergence of 
 $\{f_\tau\}_{\tau> 0}$ yields the desired conclusion.  
\end{proof}

\subsection{Proof of \Cref{lemma:holder_integral}}

\begin{proof}
    Observe that 
    $\|h\|_0  \le \|f\|_0$. 
    To show the H\"older continuity, 
    let  $t\in [0,T]$ and $x,x'\in \sR^n$,   by \Cref{assum:regularity_limit,assum:stochastic_policy},
    \begin{align*}
      &  |h_{\phi,\tau}(t,x)-h_{\phi,\tau}(t,x')|
      \\
        &=\left|\int_{\mathcal A}f(t,x,a)\pi_{\phi,\tau}(\dif a|t,x)-\int_{\mathcal A}f(t,x,a)\pi_{\phi,\tau}(\dif a|t,x')\right|
        \\
        &\quad 
        +\left|
        \int_{\mathcal A}f(t,x,a)\pi_{\phi,\tau}(\dif a|t,x')-\int_{\mathcal A}f(t,x',a)\pi_{\phi,\tau}(\dif a|t,x')\right|
        \\
        &\le \sup_{(t,x)\in [0,T]\times \sR^n}\|f(t,x,\cdot)\|_{1} C_{\chi}|\mu_\phi(t,x)-\mu_\phi(t,x')|
        +\sup_{(t,a)\in [0,T]\times \sR^d}\|f(t, \cdot,a)\|_{\eta }|x-x'|^\eta
        \\
        &\le C_{\chi} \sup_{(t,x)\in [0,T]\times \sR^n}\|f(t,x,\cdot)\|_{1} \|\mu_\phi(t,\cdot)\|_{1}
        |x-x'|
        +\sup_{(t,a)\in [0,T]\times \sR^d}\|f(t, \cdot,a)\|_{\eta }|x-x'|^\eta,
    \end{align*}
    where the first inequality used \eqref{eq:density} and \eqref{eq:lips_weak}.
    Similarly, for all $t,t'\in [0,T]$ and $x\in \sR^n$,
    $ |h_{\phi,\tau}(t,x)-h_{\phi,\tau}(t',x)|
      \le C_{\chi} \sup_{(t,x)\in [0,T]\times \sR^n}\|f(t,x,\cdot)\|_{1} \|\mu_\phi(\cdot,x)\|_{\alpha}
        |x-x'|^\alpha
        +\sup_{(x,a)\in \sR^n\times \sR^d}\|f( \cdot,x, a)\|_{\ell }|x-x'|^\ell.$
    This proves the desired estimate \eqref{eq:holder_integral}. 
    
    Finally,     for all  $(t,x)\in [0,T]\times \sR^n$,
        $\left|h_{\phi,\tau}(t,x)-h_{\phi,0}(t,x)\right|
        \le \sup_{(t,x)\in [0,T]\times \sR^n}\|f(t,x,\cdot)\|_{1} \dif_{\rm BL}(\chi_\tau(a-\mu_\phi(t,x))\dif a, \delta_{\mu_\phi(t,x)}),$
    which along with 
    the uniform boundedness of  $\mu_\phi$ and  \Cref{lemma:uniform_conv} implies the uniform convergence of $\{h_{\phi,\tau}\}_{\tau>0}$.
\end{proof}

\subsection{Proof of \Cref{lemma:X_tau_conv}}

\begin{proof}
By \cite[Theorem 6.2]{higham2008functions},
the     principal square root function is Lipschitz continuous on the space of 
positive definite matrices whose minimum eigenvalues are uniformly bounded away from zero.  
    As $\|b\|_{\alpha,1,1}+\|\Sigma\|_{\alpha,1,1}<\infty$ and $\Sigma$ is uniformly elliptic, 
    by  \Cref{lemma:holder_integral}, 
    $\sup_{\tau >0}(\|\tilde 
    b^{\pi_{\phi,\tau}}\|_{\alpha, 1}
    +\|\tilde \sigma^{\pi_{\phi,\tau}}\|_{\alpha, 1})
    <\infty$.  Thus by \cite[Theorem 3.3.1]{zhang2017backward},
    \eqref{eq:aggregated_sde} admits a unique square-integrable solution. 
Moreover, by \cite[Theorem 3.2.4]{zhang2017backward}, 
there exists a constant $C>0$ such that 
for all $\tau>0$, 
\begin{align*}
   \mathbb E\left[\sup_{t\in [0,T]}|\tilde X^{\phi,\tau}_t-X^\phi_t|^2 \right]
   \le C\left[\left(\int_0^T |\Delta b_\tau(t,X^\phi_t)|\dif t\right)^2+\int_0^T |\Delta \sigma_\tau(t,X^\phi_t)|^2\dif t\right],
\end{align*}
where $
    \Delta b_\tau (t,x)=\tilde b^{\pi_{\phi,\tau}}(t,x)-b(t,x,\mu_\phi(t,x)),
    \quad 
    \Delta \sigma_\tau (t,x)=\tilde \sigma^{\pi_{\phi,\tau}}(t,x)-\sigma(t,x,\mu_\phi(t,x)).$
By   \Cref{lemma:holder_integral},
$\lim_{\tau \to 0}\|\Delta b_\tau\|_0=0$,
and 
by the Lipschitz continuity of the principal square root function, $
    \|\Delta \sigma_\tau\|_0
    \le C\sup_{(t,x)\in [0,T]\times \sR^n}\left|\int_A \Sigma(t,x,a)\pi_{\phi,\tau}(\dif a|t,x)-\Sigma(t,x,\mu_\phi(t,x))\right|,$
which converges to zero as $\tau\to 0$. 
This proves the desired convergence of 
$(\tilde X^{\phi,\tau})_{\tau>0}$.
\end{proof}

\subsection{Proof of \Cref{lemma:V_tau_holder_conv}}

\begin{proof}
  The  PDE \eqref{eq:pde_v_tau} can be equivalently written as 
  \begin{align*}
\begin{split}
\partial_t \tilde V^{\phi,\tau}(t,x)
     -\beta  \tilde V^{\phi,\tau}(t,x)
     +  \tilde b_\tau(t,x)^\top \partial_x \tilde V^{\phi,\tau}(t,x) +\frac{1}{2} \textrm{Tr}(\tilde \Sigma_\tau (t,x)     
     \partial^2_{xx} \tilde V^{\phi,\tau}(t,x))
     +\tilde r_\tau(t,x)=0,
     \end{split}
\end{align*}
where for each $\varphi\in \{b,\Sigma,r\}$,
$\tilde \varphi_\tau(t,x)\coloneqq \int_{\mathcal A} \varphi(t,x,a) \pi_{\phi,\tau}(\dif a|t,x) $.
By \Cref{assum:regularity_limit,assum:stochastic_policy} and \Cref{lemma:holder_integral}, 
$\sup_{\tau>0}(\|\tilde b_\tau\|_{\alpha/2,\alpha}
+\|\tilde \Sigma_\tau\|_{\alpha/2,\alpha}
+\|\tilde r_\tau\|_{\alpha/2,\alpha})<\infty$,
and $\xi^\top\tilde\Sigma_\tau(t,x)\xi\ge \kappa |\xi|^2$ for all $\xi\in \sR^n$. 
Hence
by Schauder’s estimate \cite[Theorem 9.2.3]{krylov1996lectures}, 
\eqref{eq:pde_v_tau} admits a unique strong solution in $C^{1+\alpha/2,2+\alpha}_b([0,T]\times \sR^n)$ and its H\"older norm $\|\cdot\|_{1+\alpha/2,2+\alpha}$ is independent of $\tau$.
By It\^o's formula, the classical solution to \eqref{eq:pde_v_tau} coincides with  $\tilde V^{\phi,\tau}$  
defined in \eqref{eq:value_aggregate}. 

It remains to prove the convergence of 
$\{\tilde V^{\phi,\tau}\}_{\tau>0}$ and their derivatives. 
By \eqref{eq:pde_v} and \eqref{eq:pde_v_tau}, $U\coloneqq \tilde V^{\phi,\tau}-V^\phi$ is the classical solution to the following PDE: 
\begin{align} 
\label{eq:pde_v_difference}
\begin{split} 
 & \int_{\mathcal A}  \mathcal L [ U](t,x, a) \pi_{\phi,\tau}(\dif a|t,x)
 +F_\tau(t,x)=0,
\quad 
U(T,x) = 0,
  \end{split}
\end{align}
where \begin{align*}
F_\tau(t,x)&\coloneqq 
\frac{1}{2} \textrm{Tr}((\tilde \Sigma_\tau (t,x) 
-\Sigma(t,x,\mu_\phi(t,x))
     \partial^2_{xx}  V^{\phi }(t,x)) +(\tilde b_\tau(t,x)
     - b(t,x,\mu_\phi(t,x)))^\top \partial_x   V^{\phi}(t,x)
\\
&\quad +\tilde r_\tau(t,x) -r(t,x,\mu_\phi(t,x)).
    \end{align*}
By \Cref{lemma:holder_integral}, 
$\lim_{\tau \to 0}\|F_\tau\|_0=0$.
Then by the   maximum principle for linear parabolic PDEs \citep[Corollary 8.1.5]{krylov1996lectures}, $ \lim_{\tau\to 0}\|U\|_{0}=0$,
which implies $\{\tilde V^{\phi,\tau}\}_{\tau>0}$  converges to $V^\phi$, uniformly on $[0,T]\times \sR^n$. 
Since $\sup_{\tau>0}\|\tilde V^{\phi,\tau}\|_{1+\alpha/2,2+\alpha}<\infty$, by the Arzel\`a–Ascoli theorem, 
$\{\partial_{x} \tilde V^{\phi,\tau}\}_{\tau>0}$ and $\{\partial^2_{xx} \tilde V^{\phi,\tau}\}_{\tau>0}$
converge to 
$ \partial_x V^{\phi}$,
and 
$ \partial^2_{xx} V^{\phi}$, respectively, 
uniformly on compact subsets of $[0,T]\times \sR^n$.
\end{proof}   

\subsection{Proof of \Cref{lemma:spg}}

\begin{proof}
    Using an argument analogous to that in the proof of \Cref{prop:pg}, one can prove that 
    \begin{align}
    \label{eq:spg_v1}
        \partial_\phi \tilde V^{\phi,\tau}(t,x)
   =  
     \mathbb E \left[ \int_t^T
   e^{-\beta(s-t)} 
   \int_{\mathcal A} A^{\phi,\tau}(s, \tilde X^{  \phi,\tau }_s, a)
   \partial_\phi   \pi_{\phi,\tau}(s, \tilde X^{\phi,\tau  }_s,a)  \dif a 
   \dif s\,\bigg\vert\, \tilde X^{\phi,\tau}_t=x \right].
    \end{align}
    This  version of the PG formula for stochastic policies has also been given in \cite[Equation (14)]{jia2022policy_grad}.
    
    We now rewrite the integral with respect to $\dif a$ in \eqref{eq:spg_v1} using the integration-by-parts formula. 
    Fix $(t,x)\in [0,T]\times \sR^n$.
    By the expression \eqref{eq:density} of $\pi_{\phi,\tau}$ and $\mathcal A=\sR^d$, for all $j=1,\dots, k$,
    \begin{align}
    \label{eq:spg_v2_step1}
    \begin{split}
         \int_{\mathcal A} A^{\phi,\tau}(t, x, a)
   \partial_{\phi_j}   \pi_{\phi,\tau}(t, x,a)  \dif a 
   &=   -\partial_{\phi_j}   \mu_\phi(t,x)  \int_{\mathcal A} A^{\phi,\tau}(t, x, a)
   \chi_{\tau}'(a-\mu_\phi(t,x))
    \dif a 
    \\
    &= -\partial_{\phi_j}   \mu_\phi(t,x)  \int_{\mathcal A} A^{\phi,\tau}(t, x, a+\mu_\phi(t,x))
   \chi_{\tau}'(a)
    \dif a,
    \end{split}
    \end{align}
    where the last integral is well-defined    since $\chi'_\tau $ is integrable and $A^{\phi,\tau}$ is bounded by 
     \Cref{lemma:V_tau_holder_conv}.  
   For each $R>0$ and $i=1,\ldots, k$, by the integration-by-parts formula,
    \begin{align*}
    &    
    \int_{B_R} A^{\phi,\tau}(t, x, a+\mu_\phi(t,x))
   \partial_{a_i}\chi_{\tau}(a)
    \dif a
    \\
    &=
    - \int_{B_R} \partial_{a_i} A^{\phi,\tau}(t, x, a+\mu_\phi(t,x))
   \chi_{\tau}(a)
    \dif a+\int_{\partial B_R}A^{\phi,\tau}(t, x, a+\mu_\phi(t,x))
   \chi_{\tau}(a) \nu_i(a)\dif S,  
    \end{align*}
   where $\nu_i(a)=a_i/R $ is the $i$-th component of the outward unit normal vector field on $\partial B_R$. Letting $R\to \infty$ and using \Cref{assum:stochastic_policy} \Cref{item:chi_regularity} 
   and the boundedness of $A^{\phi,\tau}$ and $\partial_a A^{\phi,\tau}$
   yield that 
   \begin{align*}
  &  \int_{\mathcal A} A^{\phi,\tau}(t, x, a+\mu_\phi(t,x))
   \chi_{\tau}'(a)
    \dif a= -   \int_{\mathcal A} \partial_{a} A^{\phi,\tau}(t, x, a+\mu_\phi(t,x))
   \chi_{\tau}(a)
    \dif a 
    \\
    &
    =-   \int_{\mathcal A} \partial_{a} A^{\phi,\tau}(t, x, a)
   \chi_{\tau}(a-\mu_\phi(t,x))
    \dif a
    =-   \int_{\mathcal A} \partial_{a} A^{\phi,\tau}(t, x, a)
   \pi_{\phi,\tau}
    (\dif a|t,x).  
   \end{align*}
   This with 
   \eqref{eq:spg_v1} and 
   \eqref{eq:spg_v2_step1} yields the desired identity. 
\end{proof}

\section{Conclusion}
In this paper, we investigate deterministic policy gradient methods to achieve stability and efficiency for deep
RL in continuous-time environments, bridging the gap between discrete and continuous time algorithms.
We develop a rigorous mathematical framework and provide a model-free characterization for DPG.
We further theoretically demonstrate the issues of standard one-step TD method in continuous-time regime for the first time.
All our theoretical results are verified through extensive experiments.
We hope this work can motivate future researches on continuous-time RL.

\bibliography{ref}
\bibliographystyle{plain}

\end{document}